\newcommand{\vect}[1]{\boldsymbol{\mathbf{#1}}} 
\newcommand{\MMAD}{\textbf{MMAD}}
\newcommand{\MTEone}{\textbf{MTE1}}
\newcommand{\MTEtwo}{\textbf{MTE2}}
\newcommand{\FixP}{\textbf{FixP}}
\newcommand{\singleM}{singleM}
\newcommand{\singleN}{singleN}
\newcommand{\singleK}{singleK}
\newcommand{\baseM}{baseM}
\newcommand{\baseN}{baseN}
\newcommand{\baseK}{baseK}
\newcommand{\Cone}{[C_1]}
\newcommand{\Ctwo}{[C_2]}
\newcommand{\Cn}{[C_n]}
\newcommand{\Vone}{[V_1]}
\newcommand{\Vtwo}{[V_2]}
\newcommand{\Vn}{[V_n]}
\newcommand{\C}[1]{[C_{#1}]}
\newcommand{\V}[1]{[V_{#1}]}
\newtheorem{theorem}{Theorem}[section]
\newtheorem{lemma}{Lemma}[section]
\newtheorem{remark}{Remark}[section]
\newtheorem{example}{Example}[section]
\crefname{algorithm}{Algorithm}{Algorithm}
\crefname{lemma}{Lemma}{Lemma}
\crefname{table}{Table}{Table}
\crefname{theorem}{Theorem}{Theorem}
\crefname{corollary}{Corollary}{Corollary}
\crefname{equation}{Eq.}{Eq.}
\crefname{figure}{Fig.}{Fig.}
\crefname{section}{Section}{Section}
\title{AMLA: MUL by ADD in FlashAttention Rescaling}
\let\oldthanks\thanks
\renewcommand{\thanks}[1]{%
    \oldthanks{#1}%
    \addtocounter{footnote}{-1}
}
\author{
  Qichen Liao\\
  \texttt{liaoqichen2@huawei.com} \\
  \And
  Chengqiu Hu \\
  \texttt{hu.chengqiu@huawei.com} \\
  \And
  Fangzheng Miao \\
  \texttt{miaofangzheng@huawei.com} \\
  \AND
  Bao Li \\
  \texttt{li.bao@huawei.com} \\
  \And
  Yiyang Liu \\
  \texttt{liuyiyang16@huawei.com} \\
  \And
  Junlong Lyu \\
  \texttt{lyujunlong@huawei.com } \\
  \AND
  Lirui Jiang \\
  \texttt{jianglirui1@huawei.com} \\
  \And
  Jun Wang \\
  \texttt{hwjun.wang@huawei.com} \\
  \And
  Lingchao Zheng \\
  \texttt{zhenglingchao@huawei.com} \\
  \AND
  Jun Li \\
  \texttt{lijun276@huawei.com} \\
  \And
  Yuwei Fan\thanks{Corresponding author} \\
  \texttt{fanyuwei2@huawei.com} \\
  \AND
  \textit{Huawei}
}
\begin{document}

\maketitle

\begin{abstract}

Multi-head Latent Attention (MLA) significantly reduces KVCache memory usage in Large Language Models while introducing substantial computational overhead and intermediate variable expansion. This poses challenges for efficient hardware implementation --- especially during the decode phase. This paper introduces Ascend MLA (AMLA), a high-performance kernel specifically optimized for Huawei's Ascend NPUs. AMLA is built on two core innovations: (1) A novel FlashAttention-based algorithm that replaces floating-point multiplications with integer additions for output block rescaling, leveraging binary correspondence between FP32 and INT32 representations; (2) A Preload Pipeline strategy with hierarchical tiling that maximizes FLOPS utilization: the Preload Pipeline achieves Cube-bound performance, while hierarchical tiling overlaps data movement and computation within the Cube core. Experiments show that on Ascend 910 NPUs (integrated in CloudMatrix384~\cite{HisiCM384}), AMLA achieves up to 614 TFLOPS, reaching \textbf{86.8\%} of the theoretical maximum FLOPS, outperforming the state-of-the-art open-source FlashMLA implementation, whose FLOPS utilization is up to 66.7\%\footnote{The tensor core utilization of FlashMLA is 66.7\% of the maximum theoretical peak of H800 SXM5, and 80\% of the throttled theoretical peak.} on NVIDIA H800 SXM5~\cite{FlashMLA}. The AMLA kernel has been integrated into Huawei's CANN and will be released soon.

\end{abstract}

\section{Introduction}

Attention mechanisms lie at the heart of the Transformer architecture~\cite{vaswani2017attention}, empowering Large Language Models (LLMs) to model long-range dependencies through explicit token-to-token interactions. As downstream applications increasingly demand ultra-long context windows~\cite{yao-2023,libo2024large,yiwei2024large,yuqi2024a}, the computational cost and memory footprint of attention have become critical bottlenecks. While Multi-Query Attention (MQA)~\cite{mqa} and Grouped-Query Attention (GQA)~\cite{gqa} reduce memory pressure via key-value head sharing, Multi-Head Latent Attention (MLA)~\cite{liu2024deepseek} pushes compression further by projecting key-value representations into a shared low-dimensional latent space, preserving model quality while drastically shrinking the KVCache.

However, this efficiency comes at a cost: MLA shifts the computational profile from memory-intensive to compute-intensive. As shown in \cref{fig:attentions}, while Multi-Head Attention (MHA) and GQA remain constrained by memory bandwidth due to low arithmetic intensity (FLOPS/Byte), MLA's latent sharing significantly increases compute intensity, making it ideal for AI accelerators, especially when combined with Multi-Token Prediction (MTP), which further amplifies compute demand. This synergy makes MLA particularly well-suited for high-performance NPUs like the Ascend 910 which is designed for high FLOPS density and energy efficiency. 
\begin{figure}
    \centering
    \includegraphics[width=0.7\linewidth]{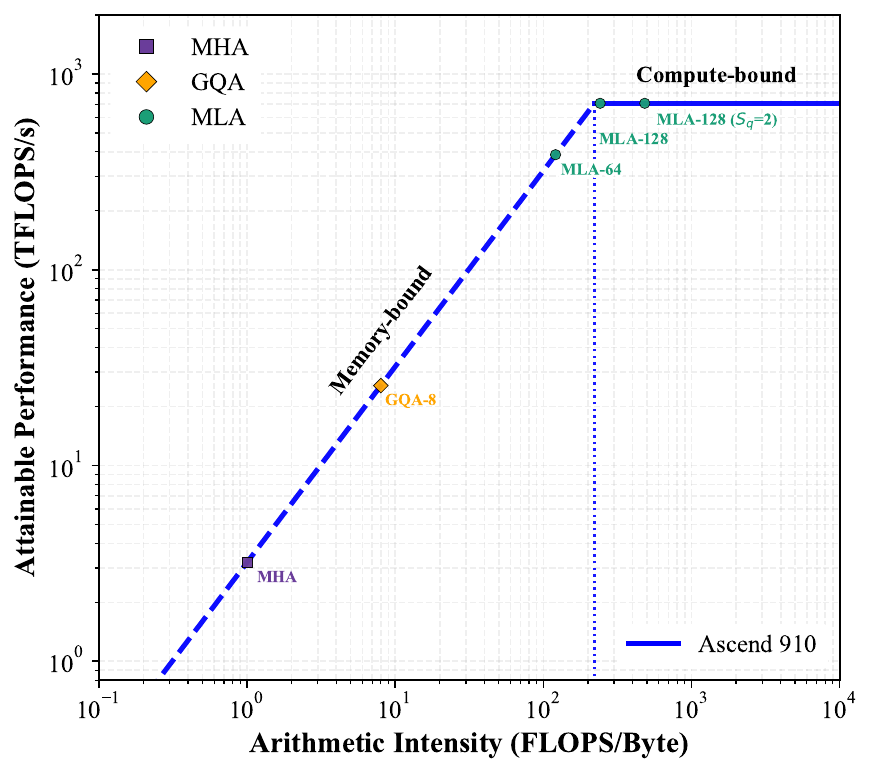}
    \caption{Roofline Analysis of BF16 Decoding on Ascend 910. The dashed segment indicates the region where performance is limited by memory bandwidth, whereas the horizontal solid lines represent the peak compute-bound performance achievable. Data points corresponding to different attention variants are plotted, showcasing their operational regimes and proximity to the hardware limits.}
    \label{fig:attentions}
\end{figure}

Yet, naive MLA implementations fail to exploit the Ascend 910's full potential. Two key bottlenecks emerge:

\begin{enumerate}

    \item \textbf{Large Output Tensors}.  
    Following the Flash Attention algorithm \cite{FlashAttention-1}, the output $\vect{O}_i$ is rescaled frequently. However, the rescaling of output $\vect{O}_i \leftarrow \vect{O}_{i-1} \cdot \exp(m_{i-1}-m_i) + \vect{P}_i\vect{V}_i$ generates large intermediate tensors (e.g., $\vect{O} \in \mathbb{R}^{128 \times 512}$ in FP32). Due to the large size of the output matrix $\vect{O}$, these tensors cannot fit within the register file of a single Streaming Multiprocessor (SM) on GPU architectures. For the Ascend 910, $\vect{O}$ must be repeatedly shuttled between Global Memory (GM) and Unified Buffer (UB) --- GM and UB are storage structures at different levels.

    \item \textbf{Underutilized Heterogeneous Compute}. Current kernels serialize Cube (attention score matmul) and Vector (softmax, scaling) operations, leaving cores idle. The lack of pipelining between physically separated compute units results in suboptimal FLOPS utilization, which is a critical waste on hardware designed for parallelism.
\end{enumerate}

Although prior work, such as FlashMLA~\cite{FlashMLA}, mitigates the bottleneck via reducing the row number of blocks and employing a complex scheduling algorithm, achieving 660 TFLOPS in compute-bound configuration on H800 SXM5. However, such an approach inevitably introduces additional overhead due to the repetitive movement and management of KVCache. 

To address all the above bottlenecks, we propose \textbf{Ascend MLA (AMLA)}, a co-designed algorithm and kernel implementation that aligns MLA's compute patterns with the Ascend 910's architectural strengths. Our contributions are:
\begin{itemize}
\item \textbf{Replacing Multiplication with Addition}: We reformulate the output update using binary reinterpretation of FP32 as INT32, replacing floating-point multiplications with atomic integer additions. This enables \textit{in-place} updates directly in GM, eliminating data movement between GM and UB for intermediate tensors.

\item \textbf{Preload Pipeline \& Hierarchical Tiling}: We introduce a \textit{Preload Pipeline} that overlaps execution of Cube cores (matrix multiplication) and Vector cores (softmax, rescaling), ensuring the kernel remains Cube-bound. Besides, we design a \textit{Hierarchical Tiling} strategy within the Cube stages, overlapping data movement with computation. This dual-layer optimization collectively minimizes pipeline bubbles and sustains near-peak compute utilization.
\end{itemize}
AMLA achieves \textbf{86.8\% FLOPS utilization} on Ascend 910, surpassing FlashMLA's 66.7\% on H800 SXM5~\cite{FlashMLA} while preserving numerical precision. The kernel is integrated into Huawei's CANN and will be open-sourced.

\noindent\textbf{Paper Structure}: Section 2 reviews attention variants and the Ascend 910 architecture. Section 3 details AMLA's algorithm and numerical analysis. Section 4 describes a hardware-aware implementation. Section 5 presents experiments. Section 6 concludes.
\section{Background}

\subsection{Attention and FlashAttention}

The Transformer architecture~\cite{vaswani2017attention} has become the foundation of modern LLMs, including DeepSeek~\cite{liu2024deepseek,deepseekai2024deepseekv3,deepseekai2025deepseekr1}, Qwen~\cite{Qwen2023_TechnicalReport,wu2025qwenimagetechnicalreport}, ChatGPT~\cite{gpt3,gpt4}, and Gemini~\cite{Gemini2023_Model}. Its defining innovation, self-attention, enables dynamic modeling of contextual relationships across input tokens, surpassing the fixed receptive fields of CNNs and sequential dependencies of RNNs. For each attention head, query ($\vect{Q}$), key ($\vect{K}$), and value ($\vect{V}$) matrices are computed, and attention output is given by:

\begin{equation}\label{eq:atten}
    \text{Attention}(\vect{Q}, \vect{K}, \vect{V}) = \text{softmax}\left(\frac{\vect{Q} \vect{K}^T}{\sqrt{D_k}}\right)\vect{V},
\end{equation}

where $D_k$ denotes the head dimension of $\vect{Q}$ and $\vect{K}$.

While theoretically efficient, the naive implementation of \cref{eq:atten} suffers from low hardware utilization due to excessive data movement between high-bandwidth memory (HBM) and compute units. To address this, FlashAttention~\cite{FlashAttention-1, FlashAttention-2} introduces tiling, recomputation, and online softmax, reducing memory traffic and achieving over 5$\times$ speedup on GPUs. Algorithmic details are provided in \cref{alg:flash_attention}.

As context lengths and batch sizes grow during LLM inference, KVCache memory consumption becomes a critical scalability constraint. To mitigate this, architectures such as MQA~\cite{mqa} and GQA~\cite{gqa} have been proposed. MQA shares a single $\vect{K}/\vect{V}$ head across all query heads, drastically reducing KVCache size. GQA extends this idea by grouping query heads and assigning each group a dedicated $\vect{K}/\vect{V}$ head, striking a balance between memory efficiency and model expressiveness.

\subsection{Multi-Head Latent Attention}

\textbf{MLA}~\cite{liu2024deepseek} further compresses KVCache by projecting key-value representations into a shared low-dimensional latent space. Specifically, the original $[\vect{K}, \vect{V}] \in \mathbb{R}^{S_2 \times (N_2 \times (D_k + D_v))}$ is compressed into a latent vector $c \in \mathbb{R}^{S_2 \times D_c}$, where 
$N_2$ is the head number of the original $\vect{K}/\vect{V}$, $S_2$ is the context length of KVCache, $D_k$ and $D_v$ are the head dimensions of $\vect{K}$ and $\vect{V}$, respectively. $D_c \ll N_2(D_k + D_v)$ is the dimension of the latent vector $c$. This compact $c$ is cached instead of full $\vect{K}/\vect{V}$ tensors.

At each decoding step, $c$ is up-projected to restore full-rank representations:
\begin{equation*}
\vect{K} = c \vect{W}_k,\quad \vect{V} = c \vect{W}_v.
\end{equation*}

By precomputing $\vect{Q}' = \vect{Q} \vect{W}_k^T$, the attention score becomes $\vect{Q}'c^T$, avoiding explicit $\vect{Q}\vect{K}^T$ computation. Similarly, $\vect{W}_v$ can be fused into the output stage. Since $c$ is shared across all heads, MLA achieves MQA-level memory compression while preserving multi-head expressivity. Implementation details, including Rotary Position Embedding (RoPE) integration, are elaborated in~\cite{liu2024deepseek}.

Notably, MLA shifts the computational profile from memory-bound to compute-bound, which is a characteristic that aligns well with high-throughput accelerators like the Ascend 910.

\subsection{Ascend 910 Architecture}

The Ascend 910 NPU is a dual-die system, with each die implementing Huawei’s Da Vinci V220 architecture~\cite{davinci220} (see \cref{fig:davinci}). Each die integrates 24 \textit{Cube cores} optimized for high-throughput matrix operations and 48 \textit{Vector cores} tailored for element-wise and reduction operations. 

\begin{figure}
    \centering
    \includegraphics[width=0.8\linewidth]{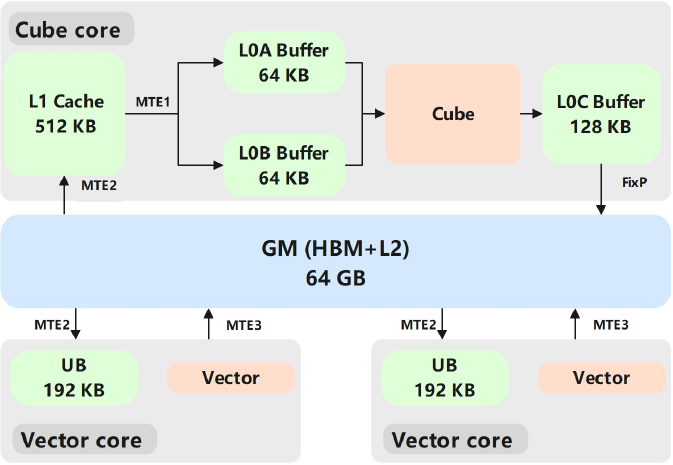}
    \caption{Da Vinci V220 architecture. Cache capacities are accessible via the Ascend C API \emph{GetCoreMemSize}.}
    \label{fig:davinci}
\end{figure}

{\bf Hierarchical Memory Architecture.} Each die features 64 GB HBM and 192 MB L2 cache, delivering 1.6 TB/s HBM bandwidth per die (3.2 TB/s aggregate). Each Cube core includes a 512 KB L1 cache and a partitioned 256 KB L0 cache (64 KB input buffers L0A/L0B, 128 KB output buffer L0C). Each Vector core utilizes a 192 KB UB as its primary working memory.

\begin{table}[htbp]
    \centering
    \begin{tabular}{cccccc}
        \toprule
        \multicolumn{2}{c}{\textbf{Component}} & \textbf{Capacity} & \textbf{Bandwidth}\\
        \midrule
        \multicolumn{2}{c}{HBM} & 64 GB & 1.6 TB/s \\
        \midrule
        \multirow{5}{*}{Cube core} & \multirow{2}{*}{L1 cache} & \multirow{2}{*}{512 KB} & --- \\
        & & & --- \\
       \cmidrule{2-4}
        & L0A cache & 64 KB& --- \\
        & L0B cache & 64 KB& ---\\
        & L0C cache & 128 KB& ---\\
        \midrule
        \multirow{1}{*}{Vector core} & UB & 192 KB & --- \\
        \bottomrule
    \end{tabular}
    \caption{Hierarchical memory system for each die in Ascend 910.}
    \label{tab:hier mem}
\end{table}

{\bf Cube-Vector Core Collaboration Model.}  
The Cube and Vector cores operate with physically distinct memory spaces. Data exchange between them requires explicit movement through GM, encompassing HBM and L2 cache. This design prioritizes core specialization and scalability, enabling concurrent execution, but requires careful orchestration in kernel design to minimize GM traffic and maximize compute overlap.

\subsection{Arithmetic Intensity of Different Attentions} \label{subsec:arithmeticintensity}

Arithmetic intensity is a fundamental metric in computer architecture, defined as the ratio of FLOPS to memory access (in bytes). It serves as a critical indicator for determining whether a workload is compute-bound or memory-bound. During the decode phase, tasks with high arithmetic intensity, such as MLA, are typically compute-bound, meaning their performance is constrained by the computational throughput of the accelerator. In contrast, tasks with low arithmetic intensity, such as MHA, are memory-bound, limited primarily by memory bandwidth.

The arithmetic intensity is formally expressed as:
\[
\text{Arithmetic Intensity} = \frac{\text{FLOPS}}{\text{Memory Access (Bytes)}}.
\]

For standard attention mechanisms, the FLOPS and memory traffic can be derived as follows:
\[
\text{FLOPS} = 2N_1 S_1 S_2 (D_k + D_v), \quad
\text{MEM}_{KV} = 
\begin{cases}
2N_2 S_2 (D_k + D_v), & \text{for MHA/GQA;} \\
2S_2 D_k,                         & \text{for MLA.}
\end{cases}
\]
Here, $N_1$ and $N_2$ denote the number of query heads and key/value heads. $S_1$ and $S_2$ represent the context length of query and key/value, where $S_1$ is usually 1 in the decode phase, and might be a bit larger with MTP. FLOPS account for both multiplication and addition operations, and memory is measured in bytes (e.g., BF16 parameters occupy 2 bytes each).

Substituting these expressions, the arithmetic intensity becomes:
\[
\text{Arithmetic Intensity} = 
\begin{cases}
N_1S_1,                  & \text{for MHA/GQA;} \\
N_1S_1\frac{D_k + D_v}{D_k}, & \text{for MLA.}
\end{cases}
\]

Different attention mechanisms thus exhibit distinct arithmetic intensity profiles. We summarize the arithmetic intensities of MHA, GQA, and MLA under common configurations in \cref{tab:arithmetic intensity} and visualize their characteristics in \cref{fig:attentions}. For further discussion on hardware-efficient attention designs, see \cite{HardwareEfficientAttention}.

During the decode phase, attention mechanisms exhibit distinct arithmetic intensities (FLOPS / Byte), leading to divergent hardware utilization profiles. Different hardware-efficient variants of attention are proposed to utilize modern hardware effectively~\cite{HardwareEfficientAttention}. As shown in \Cref{fig:attentions}, MHA remains memory-bound due to low arithmetic intensity. GQA improves intensity slightly but still favors memory bandwidth. In contrast, MLA, by sharing latent representations across heads, significantly increases arithmetic intensity, making it compute-bound.

This shift renders MLA particularly well-suited for architectures with high compute density and moderate memory bandwidth --- such as the Ascend 910 and NVIDIA H800. When combined with MTP, MLA’s compute demand further escalates, reinforcing the need for hardware-aware optimization.

To quantify efficiency, we adopt \textbf{FLOPS Utilization (FU)} as our primary performance metric, measuring the percentage of peak theoretical compute throughput achieved during kernel execution.

{\renewcommand{\arraystretch}{1.5}
\begin{table}[]
    \centering
    \begin{tabular}{cccccccc}
    \hline
         \textbf{Attention Variant} & MHA & GQA & MLA-64 & MLA-128 & MLA-128($S_q=2$)\\
         \textbf{$Q_{head}$} & 64 & 64 & 64 & 128 & 128\\
         \textbf{$K_{head}(V_{head})$} & 64 & 8 & 1 & 1 & 1 \\
         \textbf{$S_q$} & 1 & 1 & 1 & 1 & 2\\
         \textbf{Arithmetic Intensity} & $ 1 $& $8$ & $\approx 121$ & $\approx 242$ & $\approx 484$ \\
         \hline
    \end{tabular}
    \vspace{0.2cm}
    \caption{Arithmetic Intensity (FLOPS/Bytes) of Different Attentions.}
    \label{tab:arithmetic intensity}
\end{table}
}

\subsection{FlashMLA}

Proposed by DeepSeek, FlashMLA is an optimized MLA decoding kernel designed for NVIDIA Hopper GPUs~\cite{FlashMLA}. To deal with the significant space occupancy of the output matrix, FlashMLA sets $\text{BLOCK\_SIZE\_M}$ to be 64, which means that each FlashAttention iteration processes the computation of 64 rows. For cases where the group size exceeds 128, this approach may introduce additional data movement.

Furthermore, since the capacity of registers for each SM is 256KB, which is exactly twice $64\times512\times4=\text{128KB}$, it is infeasible to enable Tensor core and Cuda core at the same time if rescaling of a block is performed all at once. FlashMLA splits along the column direction and implements a sophisticated "seesaw" scheduling algorithm, enabling compute-bound optimization by overlapping CUDA Core and Tensor Core operations.

Overall, by employing smaller block granularity and a complex scheduling algorithm, FlashMLA addresses the challenges of implementing the MLA kernel and achieves good performance on the H800 SXM5.

\section{Algorithm}
\label{sec:Algorithm}

In this section, we present the AMLA algorithm and analyze its numerical accuracy and hardware efficiency.

\subsection{Motivation}

We begin by reviewing the original FlashAttention algorithm \cite{FlashAttention-1,FlashAttention-2} (\cref{alg:flash_attention}), which FlashMLA also adopts.

\begin{algorithm}[]
\caption{FlashAttention}
\label{alg:flash_attention}
\begin{algorithmic}[1]
\Require $\vect{Q}\in \mathbb{R}^{G\times D_k}$, $\vect{K} \in \mathbb{R}^{S_2\times D_k}$, $\vect{V} \in\mathbb{R}^{S_2\times D_v}$
\Ensure $\vect{O}\in\mathbb{R}^{G\times D_v}$
\State Initialize: $\vect{O} = \mathbf{0}_{G \times D_v}, 
m_0 = \mathbf{-\infty}, \ell_0 = \mathbf{0}_{G \times 1}$
\State Partition $\vect{K},\vect{V}$ into $N$ blocks
\For{$i = 1$ to $N$}
    \State $\vect{S}_i \gets \vect{Q}\vect{K}_i^T$ \Comment{$\Cone$: Compute in Cube cores}
    \State $\begin{aligned}[t]
            m_i &\gets \max(m_{i-1}, \text{rowmax}(\vect{S}_i/\sqrt{D_k})) \\
            \vect{P}_i &\gets \exp(\vect{S}_i/\sqrt{D_k} - m_i) \\
            \ell_i &\gets \ell_{i-1} \cdot \exp(m_{i-1}-m_i) + \text{rowsum}(\vect{P}_i)
           \end{aligned}$ \Comment{$\Vone$: Compute in Vector cores}
    \State $\vect{T}_i \gets \vect{P}_i\vect{V}_i$ \Comment{$\Ctwo$: Compute in Cube cores}
    \State $\vect{O}_i \gets \vect{O}_{i-1} \cdot \exp(m_{i-1}-m_i) + \vect{T}_i$ \Comment{$\Vtwo$: Compute in Vector cores}
\EndFor
\State $\vect{O} \gets \vect{O}_N / \ell_N$
\State \Return $\vect{O}$
\end{algorithmic}
\end{algorithm}

In the decode phase, inputs are $\vect{Q}\in \mathbb{R}^{G\times D_k}$, $\vect{K} \in \mathbb{R}^{S_2\times D_k}$, $\vect{V} \in\mathbb{R}^{S_2\times D_v}$, with output $\vect{O} \in \mathbb{R}^{G\times D_v}$, where typical dimensions are $G=128$, $D_k = 576$, $D_v=512$.

As shown in \Cref{alg:flash_attention}, the kernel is divided into four stages: $\Cone, \Vone, \Ctwo, \Vtwo$, where $\C{i}$ runs on Cube cores and $\V{i}$ on Vector cores. 

Compared to GQA, the $\Vtwo$ stage in MLA incurs significant data movement between GM and UB due to the large intermediate output matrix $\vect{O}_i$ ($D_v=512$ vs. $128$ in GQA). Since $\vect{O}_i$ is typically stored in FP32, its memory footprint is:

$$
G \times D_v \times 4~\text{Bytes} = 256~\text{KB}
$$

Due to the 1:2 ratio between Cube and Vector cores, the memory budget of residing $\vect{O}_i$ per Vector core will be 128 KB -- already insufficient, as the 192 KB UB space must be shared with other intermediate operands. Attempting to keep $\vect{O}_i$ resident in UB thus introduces severe scheduling complexity and resource contention. With MTP, its memory footprint doubles, making UB residency completely infeasible. Consequently, each update in
\begin{equation}
\text{Update($\vect{O}$):} \quad \vect{O}_{i} \gets \vect{O}_{i-1} \cdot \exp(m_{i-1} - m_i) + \vect{P}_i \vect{V}_i. \label{eqn:O_update}
\end{equation}
requires:

\begin{enumerate}
    \item Loading $\vect{O}_{i-1}, \vect{T}_i = \vect{P}_i \vect{V}_i$ from GM to UB, and computing \cref{eqn:O_update};
    \item Writing the updated $\vect{O}_i$ back from UB to GM.
\end{enumerate}

This double memory movement introduces latency and pipeline bubbles, making $\Vtwo$ the critical bottleneck for MLA on Ascend NPUs.

\subsection*{Naive Optimization and Its Pitfall}
To eliminate data movement, one might consider transforming \cref{eqn:O_update} into:
\begin{equation}
\widehat{\vect{O}}_i \gets \widehat{\vect{O}}_{i-1} + \exp(m_i)\vect{P}_i \vect{V}_i, \quad \text{where} \quad \widehat{\vect{O}}_i = \exp(m_i)\vect{O}_i. \label{eqn:inverseupdate}
\end{equation}

This allows direct in-GM updates via \textbf{AtomicAdd}. However, $\exp(m_i)$ may exceed the representable range of FP32, leading to overflow. 
For example, when $m_i>88$, $\exp(m_i)$ overflows in FP32. Actually, \cref{eqn:inverseupdate} is the naive softmax without safe processing, and this is why safe softmax is widely used \cite{onlinesoftmax}. Thus, this naive approach fails numerically.

\subsection{AMLA: Stable In-Memory Updates}

We propose a numerically stable transformation:

\begin{equation}
\vect{\tilde{O}}_i \gets \vect{\tilde{O}}_{i-1} \cdot 2^{n_i-n_{i-1}} + \frac{1}{r_i}\vect{P}_i \vect{V}_i, \label{eqn:amla}
\end{equation}

where

\begin{align*}
    \vect{\tilde{O}}_i &= \vect{O}_i / r_i, \quad
    n_i = \text{round}(-m_i / \ln 2), \quad
    r_i = \exp(-n_i \ln 2 - m_i).
\end{align*}

Here, $1/\sqrt{2} \leq r_i \leq \sqrt{2}$, ensuring numerical stability. The term $\frac{1}{r_i}\vect{P}_i \vect{V}_i$ can be fused into the $\Vone$ stage without extra memory access.

The key innovation lies in computing $\vect{\tilde{O}}_{i-1} \cdot 2^{n_i-n_{i-1}}$ \textit{in-place} in GM using \textbf{AtomicAdd}, leveraging the IEEE 754 floating-point representation.

\subsubsection*{Multiplication via Integer Addition}
Let's first review the FP32 format.
The IEEE 754 single-precision (FP32) format encodes a 32-bit value as three fields~\cite{ieee754}:  
a sign bit $S$ (bit 31),  
8 exponent bits $E$ (bits 23–30),  
and 23 mantissa bits $M$ (bits 0–22).  
The corresponding floating-point value, denoted as $F$, is given by:

\begin{equation}\label{eqn:floatpointing}
F = (-1)^S \times \left(1 + \frac{M}{2^{23}}\right) \times 2^{E-127}, \quad S\in\{0,1\}, \; 0<E<255, \; 0\leq M<2^{23}.
\end{equation}

We neglect subnormal values and do not consider INF/NAN hereafter, as they are irrelevant in typical LLM attention computations.

Simultaneously, interpreting the same 32-bit pattern as a signed integer (INT32), denoted as $I$, yields:

\begin{equation}\label{eqn:integer}
I = -2^{31}S + 2^{23}E + M.
\end{equation}

\begin{figure}
    \centering
    \includegraphics[width=0.8\linewidth]{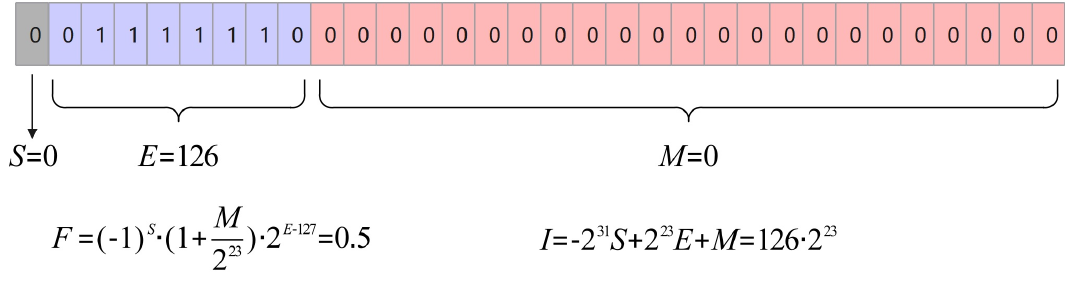}
    \caption{The bit pattern 00111111000000000000000000000000 is 0.5 when interpreted as an FP32 value, and $126 \times 2^{23}$ when interpreted as an INT32 number.}
    \label{fig:fp32int32}
\end{figure}

\begin{example}
The 32-bit pattern 00111111000000000000000000000000 represents the value 0.5 when interpreted as an FP32 number, and represents $126\times 2^{23}$ when interpreted as an INT32 number, see \cref{fig:fp32int32}
\end{example}

This reinterpretation enables a critical insight: multiplying a normalized FP32 value by a power of two ($2^k$) corresponds to adding $k$ to its exponent field $E$, which is a simple fixed-offset addition when reinterpreted as an integer. This allows us to perform exponential rescaling \textit{in-place} in GM via \textbf{AtomicAdd}, without loading the full tensor into compute units.

Defining \textbf{AS\_INT32} and \textbf{AS\_FP32} as bit-preserving reinterpretations, we have
\begin{equation}
\text{AS\_INT32}(F) = I, \quad \text{AS\_FP32}(I) = F.
\end{equation}
Given the above definition, we establish the following lemma:

\begin{lemma}\label{lem:AMLA}
    Given an FP32 number $F$, let $I=AS\_INT32(F)$ denote the integer represented by its binary pattern. Suppose $0 < E < 255$ is the unsigned integer value represented by $F$'s exponent bits. Then, for any integer $n \in \mathbb{Z}$ satisfying $-E < n < 255 -E$, the result of the multiplication $F\times 2^n$ shares the same bit pattern as the result of the integer addition $I+n\times 2^{23}$.
\end{lemma}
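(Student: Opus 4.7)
The plan is to reduce the claim to a direct verification using the two representation formulas \cref{eqn:floatpointing} and \cref{eqn:integer}. Concretely, I would compute $F\times 2^n$ using the FP32 decoding formula, read off its sign/exponent/mantissa fields, apply the INT32 decoding formula \cref{eqn:integer} to that bit pattern, and check that the resulting integer equals $I + n\cdot 2^{23}$.

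First I would write $F = (-1)^S\,(1+M/2^{23})\,2^{E-127}$ and multiply by $2^n$. Since $2^n$ is a positive power of two, it does not change the sign bit and does not alter the fraction $1+M/2^{23}$; it only shifts the exponent. Setting $E' := E+n$, the product is $F\times 2^n = (-1)^S(1+M/2^{23})\,2^{E'-127}$, so the bit pattern of $F\times 2^n$ has sign field $S$, mantissa field $M$, and exponent field $E'$. Applying \cref{eqn:integer} to that pattern gives
\begin{equation*}
\mathrm{AS\_INT32}(F\times 2^n) \;=\; -2^{31}S + 2^{23}E' + M \;=\; -2^{31}S + 2^{23}(E+n) + M \;=\; I + n\cdot 2^{23},
\end{equation*}
which is exactly the claim.

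The only non-trivial point, and the step I would flag as the main obstacle, is justifying that $E'$ still lies in the \emph{normal} range of FP32, so that \cref{eqn:floatpointing} is even applicable to the product. This is where the hypothesis $-E < n < 255-E$ is used: it guarantees $0 < E' < 255$, which prevents the pattern from becoming subnormal, zero, $\pm\infty$, or NaN. Under that assumption the exponent-field update is exact, with no rounding or renormalization of the mantissa, because multiplication by $2^n$ in the normal range is a bit-exact operation in IEEE~754. The sign and mantissa fields are therefore preserved verbatim, which is what makes the integer-addition identity work.

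Once this regime check is in place, the conclusion follows from a one-line arithmetic comparison of the two decoding formulas, so no further calculation is needed. I would close the proof by remarking that the excluded boundary cases $n=-E$ and $n=255-E$ correspond exactly to producing a subnormal/zero or an infinity, which is why the strict inequalities appear in the statement; these edge cases are precisely those where the simple ``add $n\cdot 2^{23}$ to the exponent field'' intuition breaks down, and avoiding them is what makes the in-place \textbf{AtomicAdd} rescaling in \cref{eqn:amla} numerically safe.
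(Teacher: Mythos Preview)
Your proposal is correct and follows essentially the same approach as the paper: decode $F$ via \cref{eqn:floatpointing}, use $-E<n<255-E$ to keep $E'=E+n$ in the normal range, and then apply \cref{eqn:integer} to read off $\mathrm{AS\_INT32}(F\times 2^n)=I+n\cdot 2^{23}$. Your additional remarks on why the strict inequalities exclude subnormal/zero and infinity are a nice elaboration but do not change the argument.
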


\begin{proof}
    Let $S$, $E$, and $M$ denote the numerical values represented by the sign bit (bit 31), exponent bits (bits 23–30), and mantissa bits (bits 0–22) of $F$, respectively. Then $F$ is expressed as \cref{eqn:floatpointing}. Let $I$ in \cref{eqn:integer} be the integer corresponding to $F$'s binary representation.

    Since $0< E + n < 255$ remains within the representable range of UINT8,

    \begin{equation*}
        F' = F\times 2^n = (-1)^S \times (1 + \frac{M}{2^{23}}) \times 2^{(E+n)-127},
    \end{equation*}

    and the INT32 integer corresponding to its binary representation is
    
    \begin{equation*}
        I' = - 2^{31} \times S + 2^{23} \times (E + n) + M = I + n \times 2^{23}.
    \end{equation*}

    Evidently, $F' = F \times 2^n$ and $I' = I + n \times 2 ^{23}$ share identical underlying binary representations. Therefore, $F \times 2^n$ can be implemented via a single integer addition $I + n \times 2 ^{23}$, that is

    \begin{equation} 
        F \times 2^n = \text{AS\_FP32}(\text{AS\_INT32}(F) + n\times 2^{23}).
    \end{equation}
    
\end{proof}

\begin{remark}
This paper presents results for FP32, as it remains the most prevalent precision in large language model training; however, the algorithm is also applicable to other data types like TF32, BF16, and even FP64. 
\end{remark}

\begin{algorithm}
\caption{Ascend MLA (AMLA)}
\label{alg:ascend_mla}
\begin{algorithmic}[1]
\linespread{1.05}\selectfont 
\Require $\vect{Q}\in \mathbb{R}^{G \times D_k}$, $\vect{K}\in \mathbb{R}^{S_2 \times D_k}$, $\vect{V}\in \mathbb{R}^{S_2 \times D_v}$
\Ensure $\vect{O}\in \mathbb{R}^{G \times D_v}$
\State Initialize: $\vect{O} = \mathbf{0}_{G \times D_v}, m_0 = -\infty, \ell_0 = \mathbf{0}_{G \times 1}, n_0 = -\infty, c_0 = 1$
\State Partition $\vect{K},\vect{V}$ into $N$ blocks
\For{$i = 1$ to $N$}
    \State $\vect{S}_i \gets \vect{Q} \vect{K}_i^T$ 
    \State $\vect{S}_i \gets \vect{S}_i / \sqrt{D_k}$, $m_i \gets \max(m_{i-1}, \text{rowmax}(\vect{S}_i))$, $M_\mathrm{up} \gets \exp(m_{i-1} - m_i)$
    \State $n_i \gets \text{round}(-m_i / \ln 2)$, $\vect{P}_i \gets \exp(\vect{S}_i - m_i)$, $\ell_i \gets \ell_{i-1} \cdot M_{up} + \text{rowsum}(\vect{P}_i)$ 
    \State $S_{32} \gets \exp(\ln 2 \cdot (n_i + m_i / \ln 2))$
    \State $S_{16} \gets \text{Cast\_to\_BF16}(S_{32})$, then back to FP32
    \State $c_i \gets S_{32} / S_{16}$, $\varepsilon \gets 1.5 \cdot (c_i / c_{i-1} - 1)$ 
    \State $\vect{P}_i \gets \vect{P}_i \cdot S_{16}$, then cast to BF16
    \State $N \gets \max(n_i - n_{i-1}, -30) + \varepsilon + 10^{-6}$
    \State $N \gets \text{Cast\_to\_int32}(N \cdot 2^{23})$
    \If{$i > 1$}
        \State $\vect{O} \gets \text{AS\_INT32}(\vect{O}) + N$ \Comment{\textbf{$\text{AtomicAdd}\langle\text{INT32}\rangle$ in GM}}
        \State $\vect{O} \gets \text{AS\_FP32}(\vect{O})$
    \EndIf
    \State $\vect{O}_i \gets \vect{P}_i\vect{V}_i$ 
    \State $\vect{O} \gets \vect{O} + \vect{O}_i$ \Comment{\textbf{$\text{AtomicAdd}\langle\text{FP32}\rangle$ in GM}}
\EndFor
\State $\vect{O} \gets \vect{O} / (\ell_N \cdot S_{16})$
\State \Return $\vect{O}$
\end{algorithmic}
\end{algorithm}

\begin{remark}
    In \cref{alg:ascend_mla}, the row scaling operation on Line 10 can be directly integrated into Line 7 and implemented via vector subtraction to reduce a portion of the computational load.
\end{remark}

\subsubsection*{In-Memory Update with AtomicAdd}

Applying \Cref{lem:AMLA}, we compute:

\begin{equation}\label{amla:update1}
\vect{\tilde{O}}_{i-1} \cdot 2^{n_i-n_{i-1}} = \text{AS\_FP32}\left(\text{AS\_INT32}(\vect{\tilde{O}}_{i-1}) + (n_i - n_{i-1}) \cdot 2^{23}\right).
\end{equation}

This enables in-GM updates via \textbf{AtomicAdd}. Combined with accumulating $\frac{1}{r_i}\vect{P}_i \vect{V}_i$, the entire $\vect{\tilde{O}}_i$ update occurs in memory --- eliminating transfers between GM and UB.

\subsubsection*{Error Compensation}
A subtle precision issue arises due to BF16 quantization in intermediate steps; we address it via error compensation (see \Cref{app:errorcompensation}).

\subsection{Comparison between Base and AMLA}
\label{subsec:computestage}

We recall \textbf{Base} as the standard FlashAttention-style implementation (see \Cref{alg:flash_attention}), which decomposes attention into four distinct pipeline stages: $\Cone$ ($\vect{Q}\vect{K}^T$ on Cube cores), $\Vone$ (online softmax on Vector cores), $\Ctwo$ ($\vect{P}\vect{V}$ on Cube cores), and $\Vtwo$ (rescaling on Vector cores). In MLA, the $\Vtwo$ stage becomes a severe bottleneck due to the large FP32 output tensor $\vect{O}_i$, forcing repeated transfers between GM and UB that stall the pipeline and underutilize compute units.

AMLA eliminates this bottleneck by collapsing $\Vtwo$ via numerically stable in-memory computations: Rescaling is converted to integer addition (via \textbf{$\text{AtomicAdd}\langle\text{INT32}\rangle$}), and output accumulation occurs directly in GM (via \textbf{$\text{AtomicAdd}\langle\text{FP32}\rangle$}). FlashAttention-2 conducts the normalization in the final Vector stage~\cite{FlashAttention-2}, which is also adopted by AMLA (Line 20 in \cref{alg:ascend_mla}). Apart from the final one, the pipeline is reduced to three stages ($\Cone, \Vone, \Ctwo$) without sacrificing numerical accuracy (validated in \Cref{sec:accuracy}), while the repeated data movement in $\Vtwo$ is eliminated at the same time. The comparison between Base and AMLA in the rescaling phase is shown in \cref{fig:amla_base}.

\begin{figure}[htbp]
    \centering
    \includegraphics[width=0.99\linewidth]{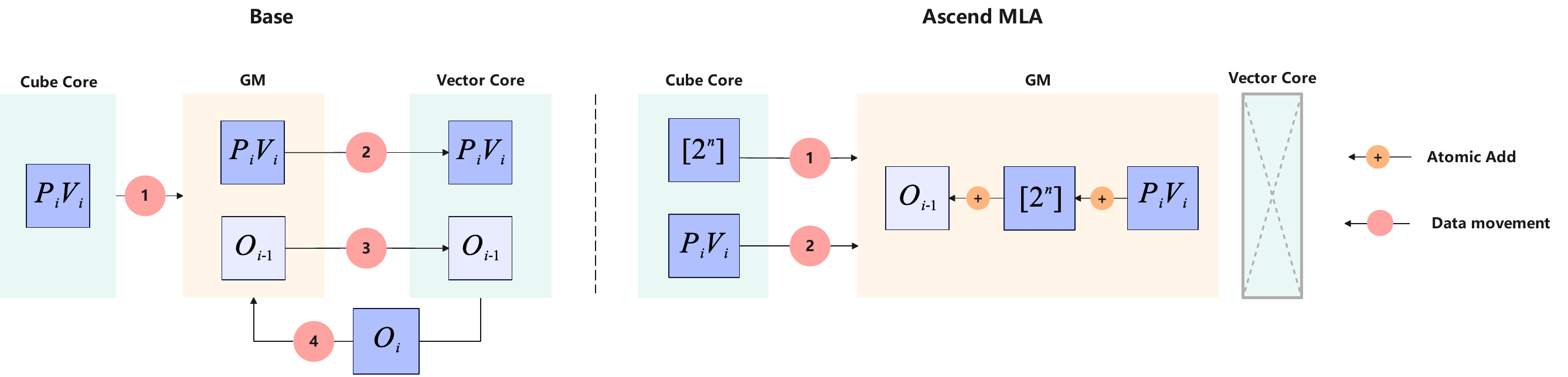}
    \caption{Base vs. Ascend MLA.}
    \label{fig:amla_base}
\end{figure}
\section{Implementation}
\label{sec:implementation}

This section presents hardware-aware optimizations tailored for Ascend NPUs. We achieve two levels of pipelining: (1) \textit{inter-stage} overlap between Cube and Vector cores across $\Cone$, $\Vone$, $\Ctwo$; and (2) \textit{intra-stage} overlap of computation and memory movement within each Cube stage. As introduced in \Cref{sec:Algorithm}, AMLA consists of three core stages: $\Cone$ and $\Ctwo$ (matrix multiplication on Cube cores), and $\Vone$ (vector operations on Vector cores).

\begin{itemize}
    \item \cref{subsec:preloadpipeline} introduces the \textbf{Preload Pipeline}, which decouples inter-stage dependencies, enabling full overlap of Vector stages by concurrent Cube stages. This renders the kernel \textbf{Cube-bound} --- its performance is ultimately limited by Cube core latency.
    \item As established in \Cref{subsec:arithmeticintensity}, MLA is inherently compute-bound. Thus, within Cube stages, we further optimize for \textbf{computation-memory overlap}. \cref{subsec:cubestagepipeline} details our hierarchical tiling and pipeline design, derived from theoretical bandwidth-compute balance analysis, to maximize Flops utilization.
\end{itemize}

\subsection{Preload Pipeline}
\label{subsec:preloadpipeline}

This subsection introduces the Preload Pipeline, a novel two-phase architecture designed to eliminate inter-stage dependencies and achieve full hardware utilization. Our presentation is structured as follows: first, we outline the core architectural design, which separates an initial dependency-resolving \textit{Preload} phase from a stall-free \textit{Steady Pipeline Loop}. Second, we formalize its efficiency by defining the \textit{Preload count}, a metric for warm-up overhead, and derive a tight theoretical bound for its minimization. Finally, we demonstrate the pipeline's practical application to the AMLA kernel. To ground this discussion, we begin with the generalized dependency chain common in such workloads, $\Cone \to \Vone \to \Ctwo \to \Vtwo$ (where AMLA is a special case with $\Vtwo = 0$), illustrating how naive execution leads to underutilization.

\subsubsection{Two-phase Pipeline Overview}

We first propose a two-phase pipeline architecture: \textit{Preload} and \textit{Steady Pipeline Loop} (see \cref{fig:preload}).

\begin{enumerate}
    \item \textbf{Preload:} Resolves upstream dependencies before the main loop begins, enabling flexible scheduling of the first cycle (Cycle 1 in \cref{fig:preload}). For instance, if $\Ctwo$ is scheduled before $\Cone$ (despite $\Ctwo$ depending on $\Vone$), the Preload phase pre-executes $\Cone$ and $\Vone$ to satisfy $\Ctwo$'s dependencies ($\Cone \to \Vone \to \Ctwo$).
    \item \textbf{Steady Pipeline Loop:} Repeated execution of identical \textit{Cycles}, each encapsulating $\Cone$, $\Vone$, $\Ctwo$, $\Vtwo$. Thanks to dependency decoupling in Preload, operations within a Cycle can be reordered freely without stalls.
\end{enumerate}

\begin{figure}[h!]
    \centering
    \includegraphics[width=\textwidth]{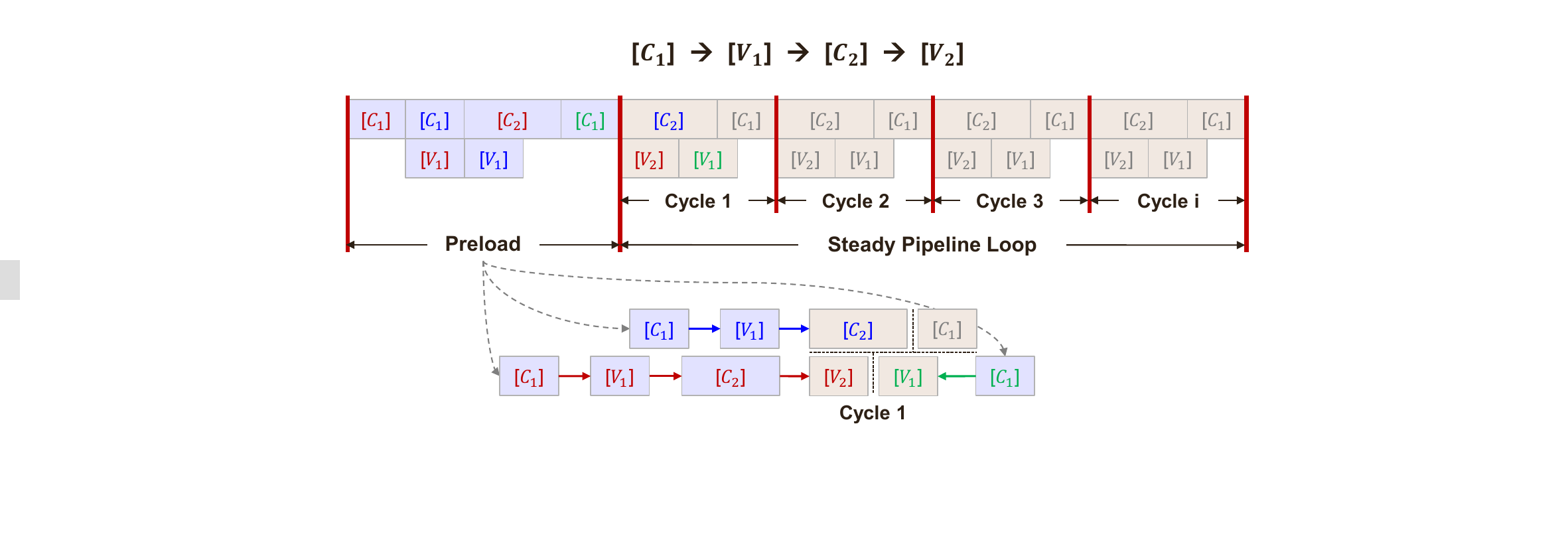}
    \caption{Two-phase pipeline: (1) Preload resolves initial dependencies; (2) Steady Loop executes Cycles with maximal concurrency.}
    \label{fig:preload}
\end{figure}

This architecture generalizes to arbitrary $n$ CV-pair chains and imposes no constraints on individual stage durations or aggregate Cube/Vector timing, ensuring optimal performance regardless of whether the workload is Cube-bound or Vector-bound.

\subsubsection{Key Conclusions}
\label{sec:preload-conclusions}

We define the \textit{Preload count} as the minimum number of $\Cone$ stages executed in the Preload phase. This metric is equivalent to the number of operation blocks per Cycle that require external dependency resolution. For example, the Preload number sin \Cref{fig:preload} is 3, since there are 3 $\Cone$ stages in Preload. {As illustrated in \cref{fig:preload2}, the Preload count can be minimized by reordering intra-Cycle blocks, which directly reduces the pipeline warm-up duration.

\begin{figure}[h!]
    \centering
    \includegraphics[width=\textwidth]{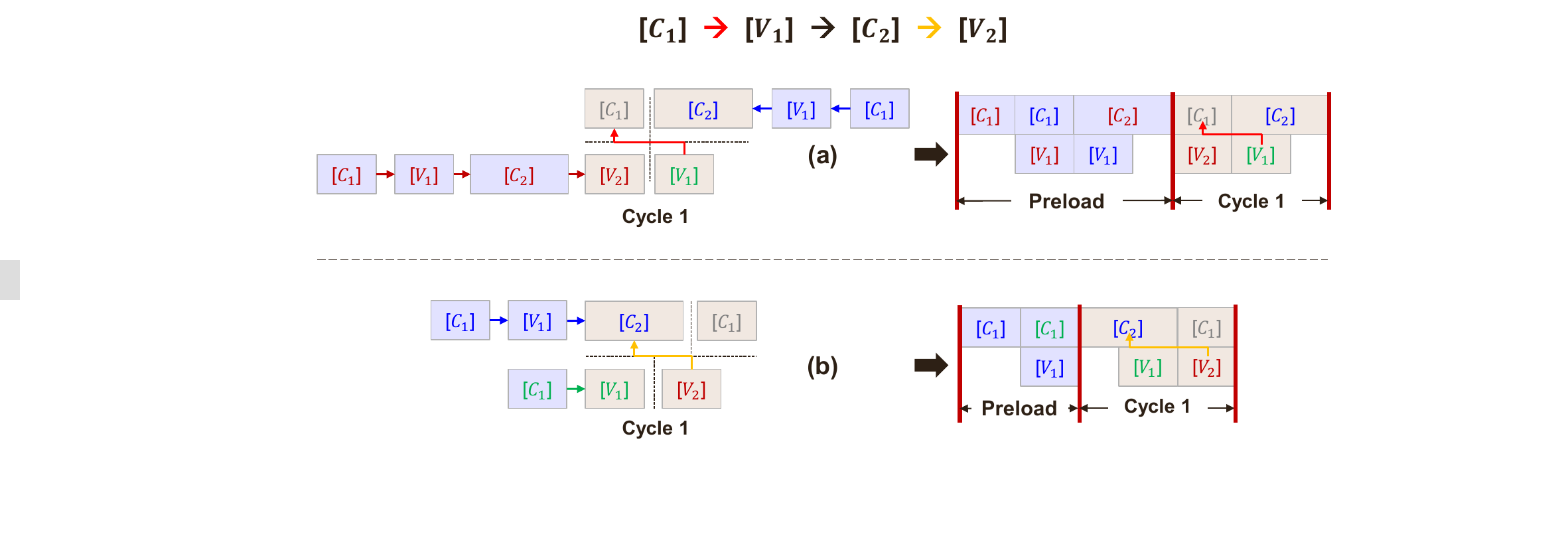}
    \caption{Rearrange Cycles to shorten the Preload phase. (a) Decoupling an earlier dependency $\Cone\to\Vone$ reduces the Preload phase by one operation block ($\Cone$) compared with \cref{fig:preload}. (b) Decoupling a latter dependency $\Ctwo\to\Vtwo$ eliminates three operation blocks ($\Cone, \Vone, \Ctwo$) from the Preload phase compared with (a), highlighting that rearranging the internal operation blocks of a Cycle achieves a more significant reduction in Preload.}
    \label{fig:preload2}
\end{figure}

Combining the upper bound (\cref{lem:preload_upper}) and existence guarantee (\cref{thm:existence_pipeline}) from Appendix \ref{app:preload-proofs}, we derive a tight theoretical guarantee:

\begin{theorem}\label{thm:preload_tight_bound}
For any $n$-stage $\C{} \V{}$ chain $\Cone \to \Vone \to \cdots \to \Cn \to \Vn$, where the latency of each $\C{i}$ and $\V{i}$ can be arbitrary:
\begin{enumerate}
    \item A pipeline with Preload count $< n$ is not always achievable.
    \item A pipeline with Preload count $= n$ always exists, with all intra-Cycle dependencies of the form $\V{} \to \C{}$.
\end{enumerate}
\end{theorem}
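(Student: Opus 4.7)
The plan is to split the theorem into its two directions and handle them separately, then combine them for the tightness statement. Part 2 is essentially a constructive existence argument, while Part 1 requires exhibiting an adversarial latency configuration.

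For Part 2, I would exhibit a specific Cycle ordering that simultaneously achieves Preload count $=n$ and forces every intra-Cycle dependency into the form $\V{} \to \C{}$. The natural candidate is to place all Vector stages before all Cube stages inside a Cycle, i.e., $\V{1}, \V{2}, \ldots, \V{n}, \C{1}, \C{2}, \ldots, \C{n}$. Under this ordering, the original edges $\C{i} \to \V{i}$ all become cross-Cycle (since $\C{i}$ of Cycle $k$ is now placed after $\V{i}$ of Cycle $k$, so $\V{i}$ of Cycle $k$ must consume $\C{i}$ from Cycle $k-1$), while the remaining edges $\V{i} \to \C{i+1}$ stay inside a single Cycle and take exactly the required $\V{} \to \C{}$ form. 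Exactly $n$ cross-Cycle $\C{}$ dependencies must be resolved for the first Cycle, yielding Preload count $=n$. Stall-freeness is routine to check because Cube and Vector cores run concurrently: the Cube-core suffix of Cycle $k-1$ can be scheduled to complete before the Vector-core prefix of Cycle $k$ needs its outputs. This step can also be discharged by direct appeal to \cref{thm:existence_pipeline}.

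For Part 1, I would construct a worst-case latency configuration under which no pipeline can achieve Preload count strictly less than $n$. The key observation is that each index $i\in\{1,\ldots,n\}$ contributes an unavoidable $\C{i} \to \V{i}$ dependency, and once an ordering is fixed, this edge is either intra-Cycle (with $\C{i}$ before $\V{i}$) or cross-Cycle (with $\C{i}$ after $\V{i}$). Choose latencies so that every $\C{i}$ is large and every $\V{i}$ is negligible; then in any ordering the Cube core is saturated and the critical path of a Cycle equals the total Cube latency. A short case analysis on each index $i$ shows that if $\C{i}$ is not placed in Preload, then either (a) $\C{i}$ precedes $\V{i}$ intra-Cycle and their Cube latency bulges the Cycle width beyond the steady-state budget, or (b) $\C{i}$ follows $\V{i}$ intra-Cycle and the cross-Cycle edge $\C{i}[k-1] \to \V{i}[k]$ blocks $\V{i}[k]$. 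In either case, stall-free execution forces at least one additional $\C{}$ to be preloaded, and summing over the $n$ indices yields Preload count $\geq n$. Combined with the upper bound $n$ from \cref{lem:preload_upper}, this gives the tight bound.

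The hardest step I anticipate is rigorously formalizing the lower bound in Part 1, since the impossibility must hold uniformly over all $(2n)!$ Cycle orderings, not just the canonical ones. A clean way to handle this is to lift the argument to an acyclic-schedule formulation, assign weights corresponding to Cube latencies, and show that any ordering with fewer than $n$ preloaded $\C{}$ stages leaves some dependency edge whose critical-path contribution forces a Cube-core idle gap. Part 2 is largely a bookkeeping exercise once the canonical ordering is specified and the appendix existence result is invoked.
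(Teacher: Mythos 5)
Both halves of your proposal have genuine gaps. For Part 1, your adversarial instance points in the wrong direction. If every $\V{i}$ is negligible and every $\C{i}$ is large, the Cycle $[\C{1},\V{1},\C{2},\V{2},\dots,\C{n},\V{n}]$ lets each $\V{i}$ consume the output of $\C{i}$ from the same Cycle essentially for free: all $n$ edges $\C{i}\to\V{i}$ become internal, so by \cref{lem:preload-count} the Preload count drops to $(2n-1)-n=n-1<n$, and your instance refutes nothing. The paper's construction is the opposite extreme: a single dominating Vector stage $\V{k}$ with $\V{k}+\C{j}>\sum_i\C{i}$ for every $j$, so that $\V{k}$ can neither be preceded by a completed Cube block nor followed by a fresh one within the Cycle's Cube budget; this is what caps the number of internal chains at $n-1$ and forces a Preload count of at least $n$.

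For Part 2, your fixed ordering $[\V{1},\dots,\V{n},\C{1},\dots,\C{n}]$ does make the $n-1$ edges $\V{i}\to\C{i+1}$ intra-Cycle, but declaring stall-freeness ``routine'' skips the entire difficulty: honoring $\V{i}\to\C{i+1}$ inside a Cycle without idling the Cube core requires the prefix inequalities $\sum_{i\le j}\V{i}\le\sum_{i\le j}\C{i}$ for all $j<n$, which fail for arbitrary latencies (take $\V{1}$ large and $\C{1}$ small). The paper's \cref{thm:existence_pipeline} exists precisely because the identity ordering does not work: one must pick the correct cyclic rotation of the pairs, and its existence is proved by a minimum-partial-sum argument on $a_i=\V{i}-\C{i+1}$. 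Your fallback of citing \cref{thm:existence_pipeline} is legitimate --- the paper's own proof of the theorem is exactly ``combine \cref{lem:preload_upper} with \cref{thm:existence_pipeline}'' --- but then your explicit construction is both superfluous and unproven, and you never address the Vector-dominated case $\sum\V{i}>\sum\C{i}$, which the paper dispatches by a symmetric argument.
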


This bound ensures minimal pipeline warm-up and maximal hardware utilization: the Preload phase resolves initial dependencies efficiently, while the Steady Loop sustains stall-free execution, eliminating idle cycles and achieving near-peak Cube core utilization.

\subsubsection{Preload Pipeline for AMLA}

The above principles of the Preload pipeline are directly applied in the AMLA kernel, as illustrated in \Cref{fig:amla_pipeline}. AMLA can be viewed as a specific instance of the above general model where the number of CV pairs $n=2$ (with stages $\Cone \to \Vone \to \Ctwo$) and the final vector stage $\Vtwo$ has a duration of zero. In accordance with \Cref{thm:preload_tight_bound}, we adopt a Preload count of 2: in the Preload phase, we first execute $\Cone$ and $\Vone$ to decouple $\Ctwo$ of the first Cycle, and then execute $\Cone$ to decouple $\Vone$ of the first Cycle. After entering the Steady Pipeline Loop, each Cycle encapsulates $[\Cone, \Ctwo, \Vone]$ with no intra-Cycle dependencies. At the same time, in Steady Pipeline Loop, the Vector stage is fully overlapped by the ongoing Cube stage at the outer level, and the kernel is Cube-bound. Since the Preload phase issues $[\Cone, \Cone, \Vone]$, we drain the pipeline at the tail by appending the remaining downstream stages $[\Ctwo, \Ctwo, \Vone]$ and a $\text{Last} \ [V]$, thereby closing the dependency chain and completing the operator.

\begin{figure}[h!]
    \centering
    \includegraphics[width=\textwidth]{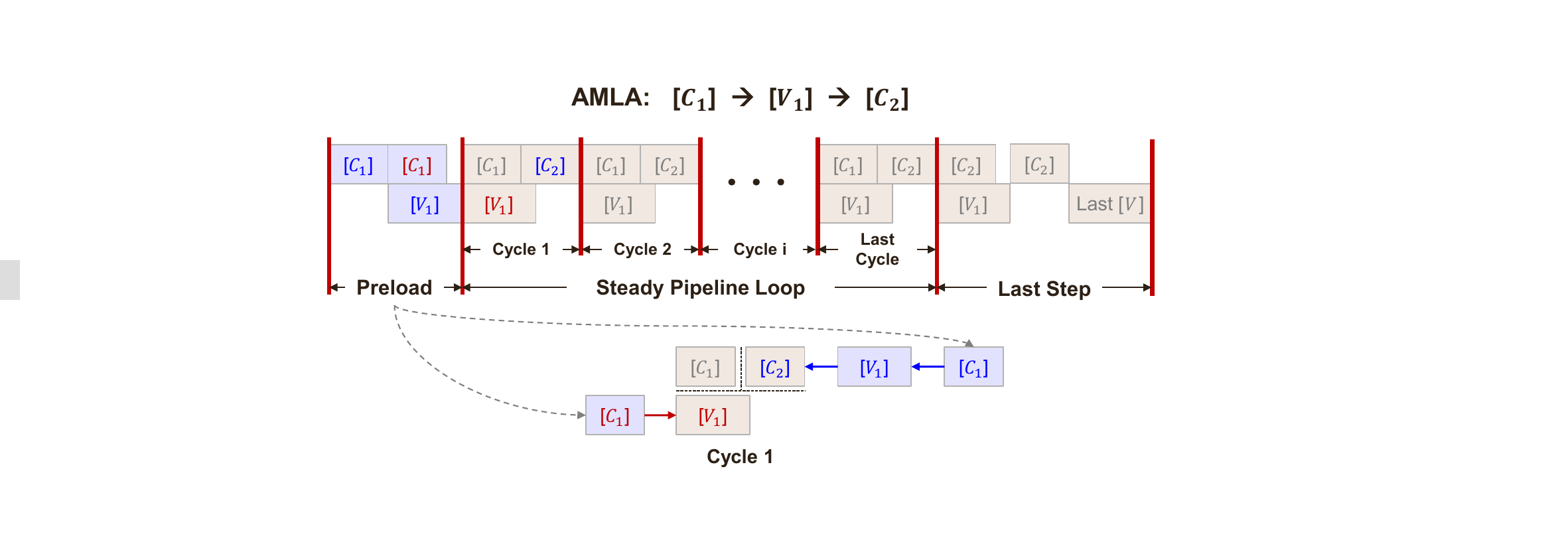}
    \caption{Preload pipeline architecture for AMLA}
    \label{fig:amla_pipeline}
\end{figure}

\subsection{Hierarchical Tiling}
\label{subsec:cubestagepipeline}

In this subsection, we detail our optimized tiling and pipeline strategy. We categorize the Cube stages into four distinct pipelines:
\begin{itemize}
    \item \MMAD. Multiply–accumulate operations.
    \item \MTEone. Data movement from L1 cache to L0A/L0B cache.
    \item \MTEtwo. Data movement from GM to L1 cache.
    \item \FixP. Data movement from L0C cache to GM.
\end{itemize}

Both $\Cone$ and $\Ctwo$ follow the pipeline order:
\begin{equation}
    \MTEtwo \to \MTEone \to \MMAD \to \FixP.
\end{equation}

We design a \textbf{hierarchical tiling strategy} that balances \MTEtwo/\MTEone/\FixP\ bandwidth and Cube core compute throughput. Let $M, N, K$ denote the tensor dimensions per FlashAttention iteration. We fix the KV block size to 512, yielding:
\begin{itemize}
    \item $\Cone$: $N = 512, K = 576$
    \item $\Ctwo$: $N = K = 512$
\end{itemize}

Data is tiled hierarchically:
\begin{itemize}
    \item From GM to L1: tiled as $\singleM \times \singleK$, $\singleN \times \singleK$
    \item From L1 to L0A/B: further tiled as $\baseM \times \baseK$, $\baseN \times \baseK$ (see \cref{fig:hier-tiling})
\end{itemize}

\begin{figure}[h!]
    \centering
    \includegraphics[width=\textwidth]{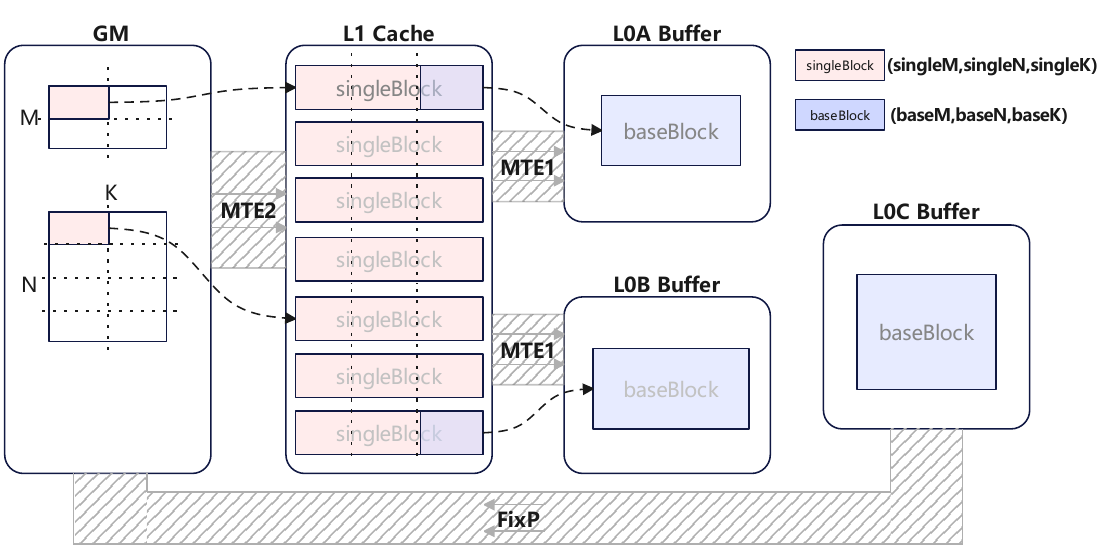}
    \caption{Hierarchical tiling across memory hierarchy.}
    \label{fig:hier-tiling}
\end{figure}

\paragraph{FlashAttention Block Size.}
To overlap data transfer with computation, we require:
\begin{equation*}
    \frac{M \cdot N \cdot K \cdot 2}{n_{\text{c}} \cdot f \cdot F_{\text{BF16}}} \ge \frac{N \cdot K \cdot \text{sizeof}(\text{BF16})}{\text{BW}_\text{HBM}}.
\end{equation*}
In the case of Ascend 910, the number of Cube cores is $n_{\text{c}}=48$, HBM bandwidth is $\text{BW}_\text{HBM}=3.2\text{ TB/s}$, $f$ is the frequency, and $F_{\text{BF16}}$ is the peak BF16 FLOPS for each Cube core per clock cycle.

We exclude the \MTEtwo\ overhead of $\vect{Q}$ and $\vect{P}$ because: (1) they reside in L1 within an iteration; (2) after initial load, subsequent accesses are served from L2 Cache. For Ascend 910, we set $M=256$ (DeepSeek-V3, 128 query heads) to balance compute and bandwidth.

\paragraph{L1 Cache Tiling.}
We retain $\vect{Q}$/$\vect{P}$ in L1 within each FlashAttention loop to minimize redundant KVCache transfers. We reserve 288 KB for $\vect{Q}$ (also used for $\vect{P}$). To avoid the long warm-up overhead of memory access, we pipeline transfers in 128 × 288 ($\vect{Q}$) or 128 × 256 ($\vect{P}$) stripes.

The remaining 224 KB L1 space holds $\vect{K}$ (512×576) and $\vect{V}$ (512×512) via a 3-buffer pipeline (72 KB/buffer):
\begin{itemize}
    \item $\Cone$: $\singleM = 128$, $\singleK = 288$, $\singleN = 256$
    \item $\Ctwo$: $\singleM = 128$, $\singleK = 256$, $\singleN = 256$
\end{itemize}

The 512 KB L1 cache is partitioned into seven 72 KB buffers: four for $\vect{Q}$/$\vect{P}$, three for $\vect{K}$/$\vect{V}$ (see \cref{fig:hier-tiling}).

\begin{remark}
    $\Cone$ and $\Ctwo$ share identical L1 Cache tiling to eliminate pipeline bubbles between stages.
\end{remark}

\paragraph{L0 Cache Tiling.}
With double buffering and L0 sizes (L0A/B: 64 KB, L0C: 128 KB), we enforce:
\begin{equation*}
\begin{aligned}
    &\baseM \cdot \baseK \cdot \text{sizeof(BF16)} \le 32\,\text{KB}, \\
    &\baseN \cdot \baseK \cdot \text{sizeof(BF16)} \le 32\,\text{KB}, \\
    &\baseM \cdot \baseN \cdot \text{sizeof(FP32)} \le 64\,\text{KB}.
\end{aligned}
\end{equation*}

To fully exploit L0 cache, we set $\baseM = \baseN = 128$, and adjust $\baseK$ per stage:
\begin{itemize}
    \item $\Cone$: $\baseK = 96$ (to match 576 input dim)
    \item $\Ctwo$: $\baseK = 128$ (to match 512 input dim)
\end{itemize}
This ensures balanced compute load per \MMAD.

\paragraph{\FixP.}
To minimize writeback traffic, we accumulate multiple results in L0C before bulk transfer to GM.

\medskip

The resulting pipeline (see \cref{fig:7buf}) achieves perfect overlap of memory and compute, ensuring the AMLA kernel remains strictly compute-bound.

\begin{figure}[h!]
    \centering
    \includegraphics[width=\textwidth]{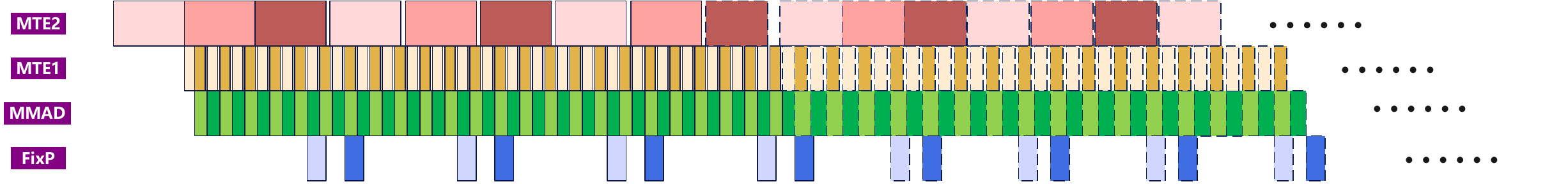}
    \caption{Optimized Cube stage pipeline. Colors denote buffer partitions (triple-buffer L1 for $\vect{K}$ and $\vect{V}$, double-buffer L0). Solid/dashed lines: $\Cone$/$\Ctwo$ stages. The MTE2 of $\vect{Q}$ and $\vect{P}$ is not included.}
    \label{fig:7buf}
\end{figure}
\section{Numerical Results}

In this section, we evaluate the accuracy and performance of AMLA through comprehensive experiments. 

\subsection{Accuracy} \label{sec:accuracy}

We evaluate AMLA's accuracy by comparing it against two baselines: \textbf{Golden} and \textbf{Base}:
\begin{itemize}
    \item \textbf{Golden}: High-precision FP32 attention computation implemented on CPU, serving as the ground-truth reference.
    \item \textbf{Base}: CPU simulation of the standard FlashAttention implementation using mixed-precision matrix multiplications, with final outputs in FP16 or BF16.
    \item \textbf{AMLA}: NPU implementation based on \cref{alg:ascend_mla}.
\end{itemize}

To quantify accuracy, we compute the relative Frobenius-norm error between each method's output and the Golden reference:
$$
\mathcal{E}(A,B) = \frac{\|A-B\|_F}{\|B\|_F+ \varepsilon},
$$
where $\|C\|_F = \sqrt{\operatorname{Tr}(CC^H)} = \sqrt{\sum_{i,j} |c_{ij}|^2}$, and $\varepsilon = 10^{-10}$ for numerical stability.

We test across multiple input distributions for $\vect{Q}, \vect{K}, \vect{V}$ in BF16: Gaussian $\mathcal{N}(0,\sigma^2)$ with varying $\sigma$, and uniform $\mathcal{U}(a,b)$ with different ranges. For each distribution, we generate 100 random samples under typical settings (context length = 8K) and report average errors. As shown in \cref{tab:gaussianexp} and \cref{tab:uniformexp}, AMLA achieves accuracy nearly identical to Base.

\begin{table}[htbp]
\centering
\begin{tabular}{ccccccc}
\toprule
$\mathcal{E}(\cdot,\text{Golden})$ & $\mathcal{N}(0,1)$ & $\mathcal{N}(0,4)$ & $\mathcal{N}(0,9)$ & $\mathcal{N}(0,16)$ & $\mathcal{N}(0,25)$ & $\mathcal{N}(0,100)$ \\
\midrule
Base & 1.77E-03 & 1.74E-03 & 1.65E-03 & 1.51E-03 & 1.33E-03 & 7.82E-04 \\
AMLA & 1.81E-03 & 1.75E-03 & 1.66E-03 & 1.51E-03 & 1.35E-03 & 7.86E-04 \\
\bottomrule
\end{tabular}
\vspace{0.2cm}
\caption{Accuracy comparison under Gaussian input distributions.}
\label{tab:gaussianexp}
\end{table}

\begin{table}[htbp]
\centering
\begin{tabular}{ccccccc}
\toprule
$\mathcal{E}(\cdot,\text{Golden})$ & $\mathcal{U}(-1,1)$ & $\mathcal{U}(-3,3)$ & $\mathcal{U}(-5,5)$ & $\mathcal{U}(-10,10)$ & $\mathcal{U}(-20,20)$ & $\mathcal{U}(-60,60)$ \\
\midrule
Base & 1.97E-03 & 1.77E-03 & 1.69E-03 & 1.24E-03 & 7.04E-04 & 2.26E-04\\
AMLA & 2.01E-03 & 1.78E-03 & 1.69E-03 & 1.24E-03 & 7.04E-04 & 2.26E-04\\
\bottomrule
\end{tabular}
\vspace{0.2cm}
\caption{Accuracy comparison under uniform input distributions.}
\label{tab:uniformexp}
\end{table}

\subsection{Performance}

As a core component in LLM inference, the MLA kernel's latency significantly impacts overall system performance, especially in long-context scenarios. With the adoption of DeepEP~\cite{DeepEP} and prefill-decode disaggregation~\cite{HisiCM384}, the computational burden of MoE layers is reduced, making attention kernel efficiency a key bottleneck for throughput.

Leveraging our algorithmic and hardware-aware optimizations, we implement the AMLA kernel on Ascend 910 to maximize hardware utilization.

We benchmark AMLA on Ascend 910 against a GPU with 989 TFLOPS of BF16 compute performance and 3.35 TB/s memory bandwidth in the decode phase. Experiments use typical BF16 configurations. Since tensor parallelism is typically not applied to the MLA component, we set the number of query heads to 128 and key heads to 1. Batch size is fixed at 96. We evaluate both non-MTP ($S_q = 1$) and MTP ($S_q = 2$) scenarios, with variable context lengths $S_k$.

\begin{figure}
    \centering
    \includegraphics[width=1\linewidth]{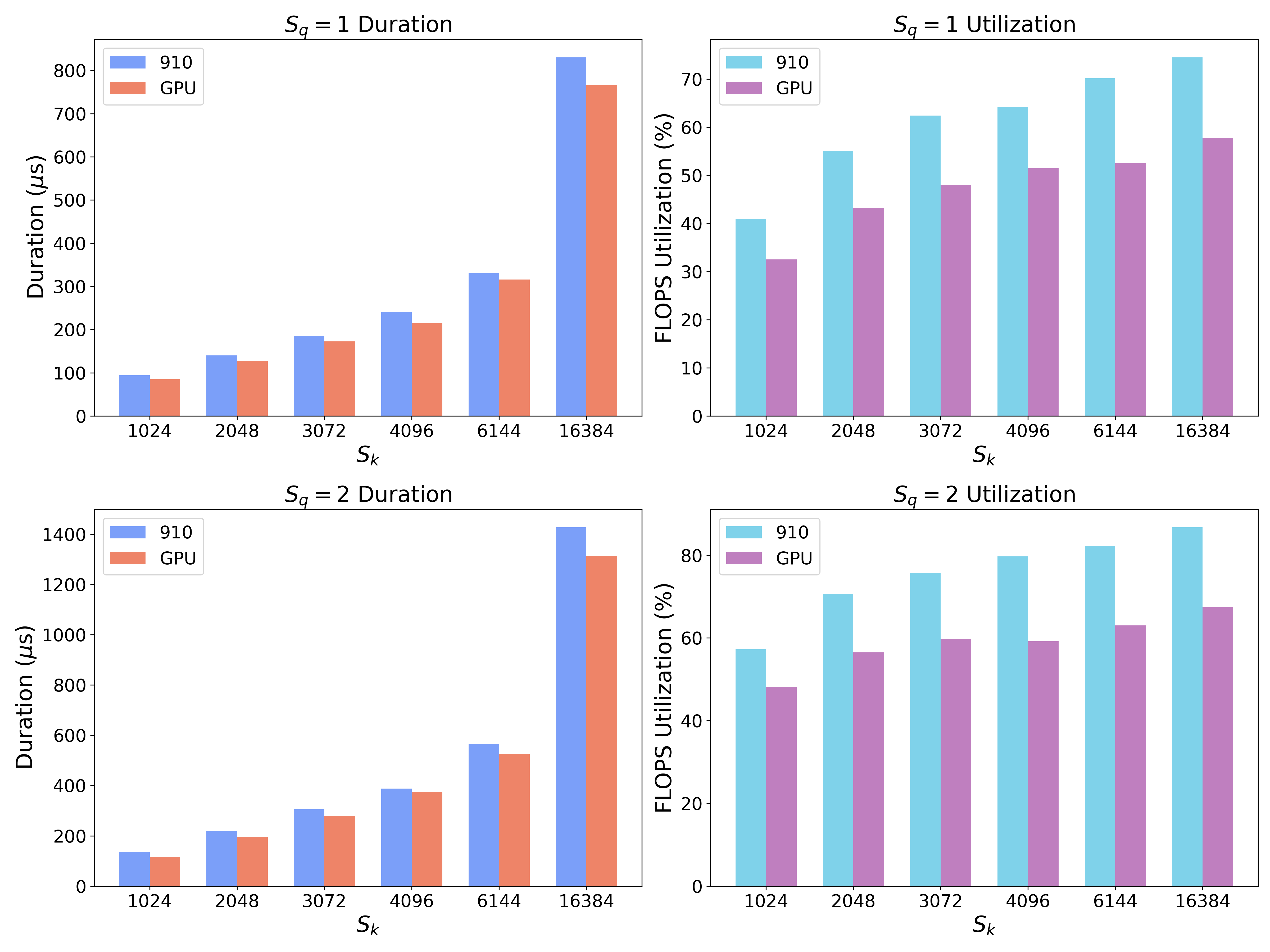}
    \caption{Performance comparison.}
    \label{fig:perf_exp}
\end{figure}

The comparison of kernel duration and FLOPS utilization is shown in \cref{fig:perf_exp}, and the specific results are shown in \cref{tab:perfexp}. Based on the experimental results, AMLA achieves up to \textbf{86.8\%} FLOPS utilization, a testament to our hardware-aware optimizations.

\begin{table}[htbp]
\centering
\begin{subtable}[t]{\textwidth}
\caption{Part 1}
\centering
\begin{tabular}{@{}cccccccccc@{}}
\toprule
  & $S_k$  & \multicolumn{2}{c}{1024} & \multicolumn{2}{c}{2048} & \multicolumn{2}{c}{3072} \\
\cmidrule(lr){3-4} \cmidrule(lr){5-6} \cmidrule(lr){7-8}
$S_q$ & Hardware &duration ($\mu$s) & FU & duration ($\mu$s) & FU & duration ($\mu$s) & FU  \\
\midrule
\multirow{2}{*}{1} & 910  & 95 & 40.9\% & 140 & 55.1\% & 186 & 62.4\% \\
                   & GPU  & 85 & 32.6\% & 128 & 43.3\% & 173 & 48.0\% \\
\midrule
\multirow{2}{*}{2} & 910  & 135 & 57.3\% & 219 & 70.7\% & 306 & 75.8\% \\
                   & GPU  & 115 & 48.1\% & 196 & 56.5\% & 278 & 59.8\% \\
\bottomrule
\end{tabular}
\end{subtable}

\begin{subtable}[t]{\textwidth}
\caption{Part 2}
\centering
\begin{tabular}{@{}cccccccccc@{}}
\toprule
  & $S_k$  & \multicolumn{2}{c}{4096} & \multicolumn{2}{c}{6144} & \multicolumn{2}{c}{16384} \\
\cmidrule(lr){3-4} \cmidrule(lr){5-6} \cmidrule(lr){7-8}
$S_q$ & Hardware &duration ($\mu$s) & FU & duration ($\mu$s) & FU & duration ($\mu$s) & FU  \\
\midrule
\multirow{2}{*}{1} & 910  & 241 & 64.1\% & 331 & 70.2\% & 830 & 74.5\% \\
                   & GPU  & 215 & 51.5\% & 316 & 52.6\% & 766 & 57.8\% \\
\midrule
\multirow{2}{*}{2} & 910  & 388 & 79.7\% & 565 & 82.2\% & 1427 & \textbf{86.8\%} \\
                   & GPU  & 374 & 59.2\% & 527 & 63.0\% & 1314 & 67.4\% \\
\bottomrule
\end{tabular}
\end{subtable}
\vspace{0.2cm}
\caption{Performance comparison. FU denotes FLOPS utilization.}
\label{tab:perfexp}
\end{table}
\section{Summary}

In this work, we present AMLA --- a highly optimized MLA kernel tailored for efficient decoding on Ascend 910 NPUs. Our key contribution lies in a synergistic combination of algorithmic innovation and hardware-aware optimization, enabling unprecedented efficiency in attention computation for long-context LLM inference.

First, by exploiting the binary correspondence between floating-point and integer representations, we reformulate the rescaling of $\vect{O}$ blocks to replace multiplications with additions. This eliminates redundant arithmetic operations and reduces associated memory traffic. Second, we introduce hardware-aware optimizations, including a Preload Pipeline strategy and optimized hierarchical tiling, which fully leverage Ascend NPU’s native support for in-memory computing and fine-grained orchestration of heterogeneous compute units.

Evaluated on Ascend 910, AMLA achieves up to \textbf{86.8\%} FLOPS utilization --- surpassing the state-of-the-art FlashMLA~\cite{FlashMLA} (67.4\% utilization) by a significant margin.

Together with our accuracy validation, this work provides a complete, high-performance, and production-ready solution for deploying MLA-based attention on Ascend NPUs.

\bibliographystyle{plain}
\bibliography{ref.bib} 

\begin{thebibliography}{10}

\bibitem{davinci220}
{NPU} architecture version 220.
\newblock \url{https://www.hiascend.com/document/detail/zh/CANNCommunityEdition/83RC1alpha002/opdevg/Ascendcopdevg/atlas_ascendc_10_0011.html}.

\bibitem{gqa}
Joshua Ainslie, James Lee-Thorp, Michiel de~Jong, et~al.
\newblock {GQA}: {T}raining generalized multi-query transformer models from multi-head checkpoints.
\newblock In {\em Proceedings of the 2023 {Conference} on {Empirical} {Methods} in {Natural} {Language} {Processing}}, Stroudsburg, PA, USA, 2023. Association for Computational Linguistics.

\bibitem{Qwen2023_TechnicalReport}
Jinze Bai, Shuai Bai, Yunfei Chu, et~al.
\newblock \text{Qwen Technical Report}.
\newblock {\em arXiv preprint arXiv:2309.16609}, 2023.

\bibitem{gpt3}
Tom~B. Brown, Benjamin Mann, Nick Ryder, et~al.
\newblock {L}anguage {M}odels are {F}ew-{S}hot {L}earners.
\newblock {\em arXiv preprint arXiv:2005.14165}, 2020.

\bibitem{FlashAttention-2}
Tri Dao.
\newblock Flash{A}ttention-2: Faster {A}ttention with {B}etter {P}arallelism and {W}ork {P}artitioning.
\newblock {\em International Conference on Learning Representations}, 2024.

\bibitem{FlashAttention-1}
Tri Dao, Dan Fu, Stefano Ermon, et~al.
\newblock Flash{A}ttention: {F}ast and {M}emory-{E}fficient {E}xact {A}ttention with {IO}-{A}wareness.
\newblock {\em Advances in neural information processing systems}, 35:16344--16359, 2022.

\bibitem{DeepEP}
DeepSeek.
\newblock \text{DeepEP}, 2025.
\newblock \url{https://github.com/deepseek-ai/DeepEP}.

\bibitem{FlashMLA}
DeepSeek.
\newblock \text{FlashMLA}, 2025.
\newblock \url{https://github.com/deepseek-ai/FlashMLA}.

\bibitem{liu2024deepseek}
DeepSeek-AI.
\newblock Deep{S}eek-{V}2: A {S}trong, {E}conomical, and {E}fficient {M}ixture-of-{E}xperts {L}anguage {M}odel, 2024.
\newblock \url{https://arxiv.org/abs/2405.04434}.

\bibitem{deepseekai2024deepseekv3}
DeepSeek-AI.
\newblock \text{DeepSeek-V3 Technical Report}.
\newblock {\em arXiv preprint arXiv:2412.19437}, 2024.

\bibitem{deepseekai2025deepseekr1}
DeepSeek-AI.
\newblock Deepseek-{R}1: {I}ncentivizing {R}easoning {C}apability in {LLM}s via {R}einforcement {L}earning.
\newblock {\em arXiv preprint arXiv:2501.12948}, 2025.

\bibitem{Gemini2023_Model}
Google~DeepMind Gemini~Team.
\newblock Gemini: {A} {F}amily of {H}ighly {C}apable {M}ultimodal {M}odels.
\newblock {\em arXiv preprint arXiv:2312.11805}, 2023.

\bibitem{ieee754}
IEEE.
\newblock \text{IEEE Standard for Floating-Point Arithmetic}.
\newblock {\em IEEE Std 754-2019 (Revision of IEEE 754-2008)}, pages 1--84, 2019.

\bibitem{yiwei2024large}
Yiwei Li, Huaqin Zhao, Hanqi Jiang, et~al.
\newblock \text{Large Language Models for Manufacturing}.
\newblock {\em arXiv preprint arXiv:2410.21418}, 2024.

\bibitem{onlinesoftmax}
Maxim Milakov and Natalia Gimelshein.
\newblock \text{Online normalizer calculation for softmax}.
\newblock {\em arXiv preprint arXiv:1805.02867}, 2018.

\bibitem{yuqi2024a}
Yuqi Nie, Yaxuan Kong, Xiaowen Dong, et~al.
\newblock A {S}urvey of {L}arge {L}anguage {M}odels for {F}inancial {A}pplications: Progress, {P}rospects and {C}hallenges.
\newblock {\em arXiv preprint arXiv:2406.11903}, 2024.

\bibitem{gpt4}
OpenAI.
\newblock \text{GPT-4 Technical Report}.
\newblock {\em arXiv preprint arXiv:2303.08774}, 2023.

\bibitem{libo2024large}
Libo Qin, Qiguang Chen, Xiachong Feng, et~al.
\newblock Large {L}anguage {M}odels {M}eet {NLP}: A {S}urvey.
\newblock {\em arXiv preprint arXiv:2405.12819}, 2024.

\bibitem{mqa}
Noam Shazeer.
\newblock \text{Fast Transformer Decoding: One Write-Head is All You Need}.
\newblock {\em arXiv preprint arXiv:1911.02150}, 2019.

\bibitem{vaswani2017attention}
Ashish Vaswani, Noam Shazeer, Niki Parmar, et~al.
\newblock \text{Attention Is All You Need}.
\newblock {\em Advances in neural information processing systems}, 30, 2017.

\bibitem{wu2025qwenimagetechnicalreport}
Chenfei Wu, Jiahao Li, Jingren Zhou, et~al.
\newblock Qwen-image technical report, 2025.

\bibitem{yao-2023}
Shunyu Yao, Jeffrey Zhao, Dian Yu, et~al.
\newblock {REACT}: Synergizing reasoning and acting in language models.
\newblock {\em International Conference on Learning Representations}, 2023.

\bibitem{HardwareEfficientAttention}
Ted Zadouri, Hubert Strauss, and Tri Dao.
\newblock Hardware-{E}fficient {A}ttention for {F}ast {D}ecoding.
\newblock {\em arXiv preprint arXiv:2505.21487}, 2025.

\bibitem{HisiCM384}
Pengfei Zuo, Huimin Lin, Junbo Deng, et~al.
\newblock Serving {L}arge {L}anguage {M}odels on {Huawei CloudMatrix384}.
\newblock {\em arXiv preprint arXiv:2506.12708}, 2025.

\end{thebibliography}
\newpage
\appendix

\section{Error Compensation in AMLA} \label{app:errorcompensation}

Although \cref{eqn:amla} is mathematically equivalent to \cref{eqn:O_update}, its numerical behavior under finite-precision floating-point arithmetic differs significantly in practice. While FP32 computation introduces negligible error, especially since the final output is typically cast to BF16, the use of low-precision matrix multiplications (e.g., $\vect{P}_i \vect{V}_i$) on AI accelerators introduces non-negligible rounding errors that accumulate across iterations.

Specifically, the on-chip computation of \cref{eqn:amla} can be expressed as:
\begin{equation}
    \vect{\tilde{O}}_i = \vect{\tilde{O}}_{i-1} \times 2^{n_i - n_{i-1}} + \frac{1}{r_i} \left( \vect{P}_i \right)_{\text{BF16}} \left( \vect{V}_i \right)_{\text{BF16}},
\end{equation}
where $(\cdot)_{\text{BF16}}$ denotes explicit casting to BF16 to reduce both computational cost and memory traffic. Modern specialized hardware units, including Ascend Cube cores, typically accept BF16 inputs and perform internal accumulation in FP32 precision.

The scaling factor $1/r_i$ can be applied in two ways:
\begin{enumerate}
    \item \textbf{Post-multiplication scaling}: Compute $\vect{T} = (\vect{P}_i)_{\text{BF16}} (\vect{V}_i)_{\text{BF16}}$ first, then scale by $1/r_i$. This requires scalar-matrix multiplication in FP32, which has to be executed on the Vector core and incurs significant memory access overhead.
    
    \item \textbf{Pre-multiplication scaling}: Scale $\vect{P}_i$ by $1/r_i$ first, then perform the matrix multiplication $\left( \vect{P}_i / r_i \right)_{\text{BF16}} (\vect{V}_i)_{\text{BF16}}$. This approach, while more straightforward to implement, amplifies rounding error: since $1/r_i$ is cast to BF16, the term $\left(1/r_i\right)_{\text{BF16}}$ introduces quantization error even when $\vect{P}_i$ contains values as benign as 1 (which is exactly representable in BF16). This error propagates multiplicatively and significantly degrades output accuracy.
\end{enumerate}

To mitigate this, we reformulate the computation by defining $1/r'_i = (1/r_i)_{\text{BF16}}$, leading to:
\begin{equation}
    \vect{\tilde{O}}_i = \vect{\tilde{O}}_{i-1} \times 2^{n_i - n_{i-1}} + \frac{r'_i}{r_i} \left( \frac{\vect{P}_i}{r'_i} \right)_{\text{BF16}} \left( \vect{V}_i \right)_{\text{BF16}}.
\end{equation}

Let $c_i = r_i / r'_i$ and $\vect{\bar{O}}_i = \vect{\tilde{O}}_i \cdot c_i$. Substituting yields the compensated recurrence:
\begin{equation}\label{eqn:modifiediteration}
    \vect{\bar{O}}_i = \vect{\bar{O}}_{i-1} \times 2^{n_i - n_{i-1}} \times \frac{c_{i}}{c_{i-1}} + \left( \frac{\vect{P}_i}{r'_i} \right)_{\text{BF16}} \left( \vect{V}_i \right)_{\text{BF16}}.
\end{equation}

This formulation requires scaling $\vect{\bar{O}}_{i-1}$ by both $2^{n_i - n_{i-1}}$ and $c_i / c_{i-1}$. The former is efficiently handled via \textbf{AtomicAdd} (cf. \cref{lem:AMLA}); the latter is non-trivial but tractable due to the proximity of $c_{i} / c_{i-1}$ to 1. Since BF16 has a relative precision of approximately $1/256$, we have $c_{i} / c_{i-1} = 1 + \epsilon$ with $|\epsilon| < 1/256$.

Considering a floating-point number $F$, 
\begin{equation}
F = (-1)^S \left(1 + \frac{M}{2^{23}}\right) 2^{E-127}
\end{equation}
Multiplying $F$ by $1+\epsilon$ yields:
\begin{equation}
    F \cdot (1+\epsilon) = (-1)^S \cdot 2^{E-127} \cdot \left(1 + \frac{M + 2^{23}\epsilon + \epsilon M}{2^{23}}\right).
\end{equation}

To approximate the integer representation of the result, we estimate $M \approx 2^{22}$ (the midpoint of the mantissa range), leading to:
\begin{align*}
    \text{AS\_INT32}(F \cdot (1+\epsilon))
    &\approx \text{AS\_INT32}(F) + \text{round}\left(2^{23} \epsilon + 2^{22} \epsilon\right) \\
    &= \text{AS\_INT32}(F) + \text{round}\left(1.5 \cdot 2^{23} \epsilon\right).
\end{align*}

This additive integer adjustment enables efficient implementation via atomic operations without invoking costly FP32 multiplies. Combined with \cref{amla:update1}, this yields the final \cref{alg:ascend_mla}.

Critically, this compensation introduces only minimal overhead confined to the $\V{1}$ stage and has negligible impact on overall kernel performance while substantially improving numerical fidelity.

\section{Proofs for the Preload Pipeline}
\label{app:preload-proofs}

\subsection{Optimization Analysis of the Preload Pipeline}

As defined in \cref{sec:preload-conclusions}, each internal dependency chain established within a Cycle reduces the required Preload count by one. For instance, in \cref{fig:preload2}a, if $\V{1}$ can directly consume the result of $\C{1}$ computed within the same Cycle (i.e., $\C{1} \to \V{1}$ is internally resolved), then $\C{1}$ need not be precomputed in the Preload phase. This internal dependency decouples $\V{1}$ from its external dependency on $\C{1}$, thereby reducing the Preload burden.

Importantly, later stages in the dependency chain (e.g., $\V{2}$, $\V{3}$) typically require longer prerequisite sequences to be internally satisfied. Hence, resolving dependencies for later blocks, such as $\C{2} \to \V{2}$ in \cref{fig:preload2}b, yields greater Preload reduction (e.g., eliminating 3 blocks instead of 1). This observation suggests that \textbf{maximizing the number of internal dependency chains} is key to minimizing the Preload overhead.

\begin{lemma}
\label{lem:preload-count}
For a stage consisting of $n$ CV pairs, the total length of the sequential dependency chain is $2n - 1$. Let $s$ denote the number of internal dependency chains constructed within a Cycle. Then the Preload count is given by:
\begin{equation*}
    \text{Preload count} = (2n - 1) - s.
\end{equation*}
Each internal chain reduces the need for external precomputation, so a larger $s$ leads to a smaller Preload count.
\end{lemma}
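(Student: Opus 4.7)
The plan is to prove this identity by an elementary edge-counting argument on the chain, then identify the number of externally-satisfied edges with the Preload count. First I would confirm the structural claim: the sequential dependency chain $\Cone \to \Vone \to \Ctwo \to \Vtwo \to \cdots \to \Cn \to \Vn$ is a directed path on $2n$ vertices (the $n$ Cube blocks interleaved with the $n$ Vector blocks), so it contains exactly $2n-1$ directed dependency edges. This gives the ``total length'' part of the lemma immediately.

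Next I would fix an arbitrary intra-Cycle execution order, i.e., a permutation of the $2n$ blocks, and classify each chain edge with respect to that order. An edge $A \to B$ is \emph{internal} if $B$ appears after $A$ in the Cycle (so that the dependency is resolved without needing output from a previous Cycle) and \emph{external} otherwise. By the hypothesis of the lemma, precisely $s$ edges are internal, so exactly $(2n-1)-s$ edges are external. Because the chain is a simple path, every block except the sourceless $\Cone$ has exactly one incoming edge, giving a clean bijection between incoming chain edges and their heads; hence the external edges are in bijection with the set of blocks in a Cycle whose predecessor is not produced earlier in the same Cycle.

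Finally I would invoke the definition recalled in \cref{sec:preload-conclusions}, which identifies the Preload count with the number of per-Cycle blocks that require external dependency resolution. Each externally-dependent block $B$ in the first Cycle of the Steady Loop must consume the output of its predecessor $A$ from the Preload phase, and distinct external edges correspond to distinct such blocks by the bijection above. This gives $\text{Preload count} = (2n-1) - s$ directly, and the monotone conclusion, that increasing $s$ strictly decreases the Preload count, follows as a corollary. The main subtlety I expect in a careful writeup is the bijection step: one must check that different external edges contribute distinct and minimal Preload work so there is neither double counting nor slack, which follows from the linearity of the chain (each block has at most one predecessor in the chain) together with the minimality built into the definition of the Preload count.
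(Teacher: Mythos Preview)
Your proposal is correct and follows essentially the same reasoning the paper uses: the paper does not give a formal proof of this lemma but justifies it by the observation (in the paragraph preceding the lemma) that each internally resolved dependency edge removes exactly one block from the set requiring external precomputation, together with the equivalence stated in \cref{sec:preload-conclusions} between the Preload count and the number of such externally-dependent blocks. Your edge-counting argument on the $2n$-vertex path and the bijection between external edges and their heads is precisely a clean formalization of that informal justification.
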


This leads to the central question: Under the constraint of preserving pairwise CV dependencies, does there exist a universal upper bound $\bar{s}$ on the number of internal dependency chains that can be achieved via rearrangement, regardless of individual stage durations? If so, what is $\bar{s}$?

\subsection{Problem Construction and Upper Bound Derivation}

To address this, we consider two cases based on the relative total durations of Cube and Vector stages: $\sum \V{i} \leq \sum \C{i}$ (Cube-dominated) and $\sum \V{i} \geq \sum \C{i}$ (Vector-dominated). We first analyze the Cube-dominated case.

Consider $n$ CV pairs forming the chain $\C{1} \to \V{1} \to \C{2} \to \V{2} \to \cdots \to \C{n} \to \V{n}$. We aim to prove that no pipeline can guarantee more than $\bar{s} = n - 1$ internal dependency chains for arbitrary stage durations.

To establish this upper bound, we construct an adversarial scenario: suppose there exists a Vector stage $\V{k}$ such that
\begin{equation}
\label{eqn:extreme-case}
    \V{k} + \C{j} > \sum_{i=1}^n \C{i}, \quad \forall j \in \{1,2,\dots,n\}.
\end{equation}
This implies that $\V{k}$ cannot overlap with any Cube stage without exceeding the total cube duration. Given that all Vector stages must be overlapped by the cumulative Cube execution (to avoid extending the Cycle), we deduce:
\begin{itemize}
    \item No Cube stage can complete before $\V{k}$ starts --- otherwise, for any such completed Cube stage $\C{k'}$, we have $\V{k} + \C{k'} > \sum \C{i}$, violating the overlap constraint. Thus, the $k-1$ Vector stages preceding $\V{k}$ cannot depend on their corresponding Cube stage.
    \item No new Cube stage can start after $\V{k}$ ends, so the $n-k$ Vector stages following $\V{k}$ cannot be consumed by their Cube stages.
    \item $\V{k}$ itself can neither depend on any Cube stage nor be consumed by any Cube stage.
\end{itemize}
Consequently, at most $(k-1) + (n-k) = n-1$ internal dependency chains can be formed.

By \cref{lem:preload-count}, the minimal Preload count in this case is $(2n - 1) - (n - 1) = n$. The symmetric case ($\sum \V{i} \geq \sum \C{i}$) yields the same bound via analogous reasoning. Thus, we conclude:

\begin{lemma}\label{lem:preload_upper}
For any $n$ CV pairs satisfying the sequential dependency chain, no pipeline can universally guarantee more than $n-1$ internal dependency chains, regardless of stage durations or scheduling permutations.
\end{lemma}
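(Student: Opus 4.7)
The plan is to prove the upper bound $\bar s \le n-1$ by an adversarial construction: I will exhibit stage durations for which no legal scheduling of the $2n$ blocks inside a single Cycle admits more than $n-1$ internal CV dependency chains. Because the lemma asserts a \emph{universal} guarantee across duration patterns, producing a single bad instance suffices to rule out any pipeline that attempts to certify $n$ or more internal chains uniformly.

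First I would invoke the symmetry between Cube and Vector roles to reduce to the case $\sum_i \V{i} \le \sum_i \C{i}$; the opposite case follows by swapping the two timelines in the argument. In this Cube-dominated regime the steady-state Cycle length is forced to be $\sum_i \C{i}$, so every Vector block must be wholly overlapped by Cube execution inside a single Cycle, otherwise the pipeline period would strictly exceed the Cube aggregate and throughput would degrade. I would then instantiate \cref{eqn:extreme-case} explicitly: choose some index $k$ and set $\V{k}$ just longer than $\sum_i \C{i} - \min_j \C{j}$, with the other $\V{i}$ negligible. This makes $\V{k}$ so long that if any single Cube block were scheduled entirely before $\V{k}$ starts, or entirely after $\V{k}$ ends, the residual Cube time could not possibly cover $\V{k}$.

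Next I would translate this geometric constraint into a counting bound. An internal chain $\C{i}\!\to\!\V{i}$ (or $\V{i}\!\to\!\C{i+1}$) inside a Cycle requires the producer's interval to end no later than the consumer's interval begins, within the same Cycle instance. Under the adversarial constraint just derived: (i) no Cube block can finish before $\V{k}$ starts, so none of $\V{1},\ldots,\V{k-1}$ can internally consume its paired Cube output; (ii) symmetrically, no Cube block can begin after $\V{k}$ ends, so none of $\V{k+1},\ldots,\V{n}$ can feed any subsequent Cube stage internally; (iii) $\V{k}$ itself participates in no internal chain on either side. Summing the surviving possibilities gives at most $(k-1)+(n-k)=n-1$ internal chains, which combined with \cref{lem:preload-count} yields the lemma.

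The main obstacle will be pinning down the scheduling model precisely enough that ``Cycle,'' ``overlap,'' and ``internal dependency chain'' have unambiguous timing semantics. I need to fix that Cube blocks occupy disjoint intervals on a single Cube timeline, Vector blocks likewise on a single Vector timeline, that the steady-state Cycle period equals $\max(\sum_i\C{i},\sum_i\V{i})$, and that an internal chain $X\!\to\!Y$ means $X$ completes before $Y$ starts within the same Cycle instance. Once this model is fixed, (i)--(iii) reduce to interval-arithmetic facts; the delicate point is verifying that no clever reordering --- nor any attempt to straddle $\V{k}$ across a Cycle boundary --- can circumvent the adversarial inequality \cref{eqn:extreme-case}, which is where the ``universal'' quantifier in the lemma really bites.
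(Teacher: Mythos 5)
Your proposal is correct and takes essentially the same route as the paper's own proof: the same adversarial hypothesis (\cref{eqn:extreme-case}), the same three exclusions (no Cube block can finish before $\V{k}$ starts, none can start after it ends, and $\V{k}$ itself joins no internal chain), the same count $(k-1)+(n-k)=n-1$ feeding into \cref{lem:preload-count}, and the same symmetry reduction for the Vector-dominated case. Your explicit instantiation of durations ($\V{k}$ slightly exceeding $\sum_i \C{i} - \min_j \C{j}$ with the remaining $\V{i}$ negligible) is a useful concretization the paper leaves implicit, since it verifies that the hypothesized extreme case is actually realizable within the Cube-dominated regime.
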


\subsection{Existence of Optimal Pipeline: Formalization and Key Theorem}

We now prove that a pipeline achieving exactly $n-1$ internal dependency chains always exists under $\sum \V{i} \leq \sum \C{i}$. Without loss of generality, consider $n=3$. The six permutations of $\{\C{1}, \C{2}, \C{3}\}$ fall into two cyclic groups:
\begin{itemize}
    \item Group 1 (clockwise): $\C{1}\C{2}\C{3}$, $\C{2}\C{3}\C{1}$, $\C{3}\C{1}\C{2}$
    \item Group 2 (counter-clockwise): $\C{1}\C{3}\C{2}$, $\C{3}\C{2}\C{1}$, $\C{2}\C{1}\C{3}$
\end{itemize}

Focusing on Group 1 (see \cref{fig:inequalities}), each permutation admits two internal $\V{} \to \C{}$ dependency chains. If at least one permutation satisfies its temporal constraints, then a valid pipeline with $n-1=2$ chains exists. The same holds for Group 2, implying existence for $n=3$.

\begin{figure}[h!]
    \centering
    \includegraphics[width=\textwidth]{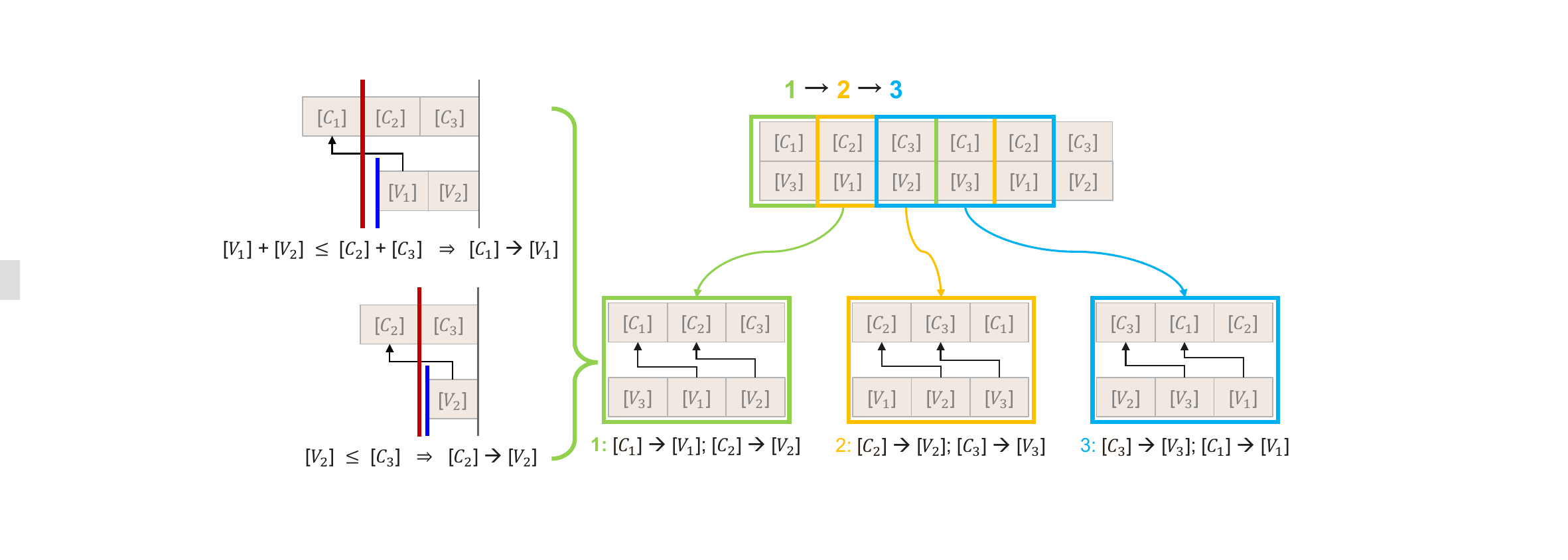}
    \caption{Cyclic permutations of $\C{1}\C{2}\C{3}$ with two internal $\V{} \to \C{}$ dependency chains and their temporal constraints.}
    \label{fig:inequalities}
\end{figure}

For permutation $\C{1}\C{2}\C{3}$, enforcing $\C{1} \to \V{1}$ and $\C{2} \to \V{2}$ requires:
\begin{equation*}
    \V{1} + \V{2} \leq \C{2} + \C{3} \quad \text{and} \quad \V{2} \leq \C{3}.
\end{equation*}
Similarly, the other two permutations in Group 1 yield:
\begin{enumerate}
    \item $\V{1} + \V{2} \leq \C{2} + \C{3}$ and $\V{2} \leq \C{3}$: enables $\C{1} \to \V{1}$, $\C{2} \to \V{2}$;
    \item $\V{2} + \V{3} \leq \C{3} + \C{1}$ and $\V{3} \leq \C{1}$: enables $\C{2} \to \V{2}$, $\C{3} \to \V{3}$;
    \item $\V{3} + \V{1} \leq \C{1} + \C{2}$ and $\V{1} \leq \C{2}$: enables $\C{3} \to \V{3}$, $\C{1} \to \V{1}$.
\end{enumerate}
We must show that at least one of these three compound conditions holds.

Generalizing to $n$ CV pairs, we formalize the existence guarantee as:

\begin{theorem}\label{thm:existence_pipeline}
Given $n$ CV pairs with dependency chain $\C{1} \to \V{1} \to \cdots \to \C{n} \to \V{n}$ and $\sum_{i=1}^n \V{i} \leq \sum_{i=1}^n \C{i}$, there exists an index $k \in \{1, \dots, n\}$ such that:
\begin{equation}\label{eq:compound_ineq}
    \bigwedge_{j=1}^{n-1} \left( \sum_{i=0}^{j-1} \V{n-k-i} \leq \sum_{i=0}^{j-1} \C{n+1-k-i} \right),
\end{equation}
where indices are interpreted cyclically modulo $n$ (i.e., $\V{n+i} \equiv \V{i}$, $\C{-i} \equiv \C{n-i}$). This guarantees the existence of a pipeline with exactly $n-1$ internal dependency chains.
\end{theorem}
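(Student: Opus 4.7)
The plan is to recognize the statement as an instance of the classical \emph{cycle lemma} (Raney's lemma). First, I would define the cyclic slack sequence $d_\ell := \C{\ell+1} - \V{\ell}$ for $\ell = 1, \ldots, n$ with indices read modulo $n$ (so $\C{n+1} \equiv \C{1}$). A direct term-by-term comparison shows that the conjunction in \eqref{eq:compound_ineq} for a given $k$ is equivalent to requiring that the cyclic suffix of $(d_\ell)$ of length $j$ ending at position $n-k$ is non-negative for every $j \in \{1, \ldots, n-1\}$, and the hypothesis $\sum_i \V{i} \leq \sum_i \C{i}$ becomes $\sum_{\ell=1}^n d_\ell \geq 0$. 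After this translation the problem is purely combinatorial on the cyclic real sequence $(d_\ell)$.

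Second, I would invoke the cycle lemma in its "maximum prefix sum" form. Let $P_0 := 0$ and $P_\ell := d_1 + \cdots + d_\ell$, so that $P_n \geq 0$ by the hypothesis. Choose $e^\star \in \{0, 1, \ldots, n-1\}$ at which $P$ attains its maximum. For any $j \in \{1, \ldots, n-1\}$, the cyclic length-$j$ suffix of $(d_\ell)$ ending at index $e^\star$ equals $P_{e^\star} - P_{e^\star - j}$ when $e^\star \geq j$, and equals $(P_{e^\star} - P_{e^\star - j + n}) + P_n$ when $e^\star < j$; both quantities are $\geq 0$ by the maximality of $P_{e^\star}$ together with $P_n \geq 0$. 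Taking $k$ to be $n - e^\star$ (reduced into $\{1, \ldots, n\}$) then produces the desired index, establishing the existence of the required cyclic starting point.

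The main obstacle I expect is the index bookkeeping rather than the underlying combinatorics: I need to verify carefully that the asymmetric pattern $\V{n-k-i}, \C{n+1-k-i}$ in the theorem statement really collapses, after the shift $d_\ell = \C{\ell+1} - \V{\ell}$, to a clean backward suffix of $(d_\ell)$, and that the wrap-around case is rescued precisely by the non-negativity of the full sum $P_n$ (which is the theorem's hypothesis and therefore cannot be dropped). A secondary point is that the cycle lemma is usually phrased for \emph{strictly} positive total sums, producing strictly positive partial sums and a unique starting position; here I only need non-negative partial sums, and the maximum-prefix-sum argument carries over verbatim to the non-strict case.

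Finally, as a sanity check I would specialize to $n = 3$ and verify that the three cyclic rotations analyzed before the theorem correspond exactly to the three possible choices of $e^\star \in \{0,1,2\}$, so that the hand-check in \cref{fig:inequalities} is recovered as a small case of the general argument; this also confirms that the resulting dependencies in the Steady Pipeline Loop are all of the form $\V{} \to \C{}$, tying the result back to the second clause of \cref{thm:preload_tight_bound}.
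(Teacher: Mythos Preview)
Your proposal is correct and essentially identical to the paper's own proof: the paper defines $a_i = \V{i} - \C{i+1} = -d_i$, takes partial sums $F(\ell) = -P_\ell$, selects the index where $F$ is \emph{minimal} (equivalently, where your $P$ is maximal), and carries out the same two-case wrap-around analysis you outline. The only differences are the sign convention and that you explicitly name the argument as the cycle/Raney lemma, whereas the paper presents it from scratch without attribution.
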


\subsection{Proof of Theorem \ref{thm:existence_pipeline}}

To simplify the analysis, define an auxiliary sequence $\{a_i\}_{i=1}^n$:
\begin{equation}\label{eq:aux_sequence}
    a_i = \V{i} - \C{i+1}, \quad \text{with} \quad \C{n+1} \equiv \C{1}.
\end{equation}
The global constraint becomes:
\begin{equation}\label{eq:total_constraint}
    \sum_{i=1}^n a_i = \sum_{i=1}^n \V{i} - \sum_{i=1}^n \C{i} \leq 0.
\end{equation}

\subsubsection{Reformulation via Partial Sums}

Let $m = n - k$. Then \cref{eq:compound_ineq} is equivalent to:
\begin{equation}\label{eq:reformulated_ineq}
    \sum_{i=0}^{j-1} a_{m-i} \leq 0, \quad \forall j \in \{1, \dots, n-1\},
\end{equation}
with cyclic indexing. Thus, proving the above \Cref{thm:existence_pipeline} reduces to showing that some $m \in \{1,\dots,n\}$ satisfies \cref{eq:reformulated_ineq} for all $j$.

\subsubsection{Key Insight: Minimum Partial Sum}

Define the partial sum $F(l) = \sum_{i=1}^l a_i$ for $l = 0,1,\dots,n$, with $F(0) = 0$. By \cref{eq:total_constraint}, $F(n) \leq 0$. Let $k \in \{1,\dots,n\}$ be such that $F(k)$ is minimal.

We claim $m = k$ satisfies \cref{eq:reformulated_ineq}. Consider two cases:

\paragraph{Case 1: $1 \leq j \leq k$}
\begin{equation}\label{eq:case1_sum}
    \sum_{i=0}^{j-1} a_{k-i} = F(k) - F(k-j) \leq 0,
\end{equation}
since $F(k) \leq F(k-j)$ by minimality.

\paragraph{Case 2: $k < j \leq n-1$}
Using cyclicity $a_t = a_{t+n}$:
\begin{align}
    \sum_{i=0}^{j-1} a_{k-i} 
    &= \sum_{i=0}^{j-1} a_{k-i+n} \nonumber \\
    &= F(k+n) - F(k+n-j) \nonumber \\
    &= \left[F(n) + F(k)\right] - F(k+n-j) \label{eq:case2_F_expand} \\
    &\leq F(n) \leq 0, \label{eq:case2_result}
\end{align}
since $F(k+n-j) \geq F(k)$ and $F(n) \leq 0$.

\subsubsection{Conclusion}

In both cases, the inequality holds. Therefore, setting $m = k$ (i.e., choosing the permutation aligned with the minimum partial sum) guarantees the existence of $n-1$ internal dependency chains. For the symmetric case $\sum \V{i} \geq \sum \C{i}$, the same logic applies by reversing roles and inequalities. This completes the proof of \cref{thm:existence_pipeline}.

\end{document}